\documentclass{article}
\usepackage{arxiv_formatting,times}

\usepackage{amsmath,amsfonts,bm}

\def\eqref#1{equation~\ref{#1}}

\def\1{\bm{1}}

\DeclareMathAlphabet{\mathsfit}{\encodingdefault}{\sfdefault}{m}{sl}
\SetMathAlphabet{\mathsfit}{bold}{\encodingdefault}{\sfdefault}{bx}{n}

\usepackage{xcolor}
\usepackage{float}
\usepackage{hyperref}
\definecolor{citecolor}{HTML}{2A78D6}
\definecolor{linkcolor}{HTML}{4A3AA7}
\definecolor{urlcolor}{HTML}{0E9E8A}
\hypersetup{colorlinks=true, citecolor=citecolor, linkcolor=linkcolor, urlcolor=urlcolor}
\usepackage{url}

\iclrfinalcopy
\usepackage{graphicx}
\usepackage{subcaption}
\usepackage{cleveref}
\usepackage{xurl}
\usepackage{booktabs}
\usepackage{wrapfig}
\usepackage{xcolor}
\usepackage[bottom]{footmisc}
\usepackage{amsthm}
\usepackage{amsmath}
\usepackage{amssymb}
\usepackage{mathtools}
\usepackage{titletoc}

\definecolor{rlColor}{HTML}{F28C28}
\definecolor{distilledColor}{HTML}{2A78D6}
\definecolor{baseTextColor}{HTML}{696969}

\usepackage[most]{tcolorbox}

\definecolor{tracered}{RGB}{240,180,180}
\definecolor{tracegreen}{RGB}{180,220,180}
\definecolor{codebg}{RGB}{245,245,245}
\definecolor{traceorange}{HTML}{F0D2B4}
\definecolor{tracegrey}{RGB}{210,210,210}
\definecolor{traceyellow}{RGB}{245,220,155}

\lstdefinestyle{tracecode}{
  basicstyle=\ttfamily\footnotesize,
  backgroundcolor=\color{codebg},
  breaklines=true,
  columns=fullflexible,
  keepspaces=true,
  frame=none,
  xleftmargin=4pt,
  xrightmargin=4pt,
  aboveskip=4pt,
  belowskip=4pt,
}

\newtcolorbox{tracebox}[1]{
  colback=#1,
  colframe=#1,
  arc=4mm,
  boxrule=0pt,
  left=6pt,right=6pt,top=6pt,bottom=6pt,
}
\newcommand{\bs}{\textbackslash}
\newcommand{\op}{\textasciigrave\textasciigrave\textasciigrave}

\theoremstyle{plain}

\newtheorem*{theorem*}{Theorem}

\theoremstyle{definition}

\theoremstyle{remark}

\fancypagestyle{firstpage}{
  \fancyhead{}
  \renewcommand{\headrulewidth}{0pt}
}

\title{Distillation Defenses Easily Break After\\Reinforcement Learning}

\author{%
\makebox[\linewidth][c]{\normalfont
\rule{0pt}{35pt}%
\begin{tabular}{@{}c@{}}
\textbf{Shidan Javaheri}$^{1}$ \quad
\textbf{Alexander Panfilov}$^{2}$ \quad
\textbf{Oliver Britton}$^{1}$ \\
\textbf{Yarin Gal}$^{1}$ \quad
\textbf{Yonatan Gideoni}$^{1}$ \\[0.3em]
$^{1}$University of Oxford \quad $^{2}$ELLIS Institute Tübingen \& MPI for Intelligent Systems\quad \\[0.3em]
\texttt{shidan@javaheri.org} \quad
\texttt{yg@robots.ox.ac.uk}
\end{tabular}}
\vspace{-15pt}
}

\begin{document}

\maketitle

\begin{abstract}
Distillation attacks copy the reasoning capabilities of closed-source large language models, allowing bad actors to replicate state-of-the-art performance at low cost. Attackers systematically collect a large volume of frontier model reasoning traces and then train (i.e., ``distill'') their own models on these traces. Existing defenses against distillation attacks are typically evaluated immediately after distillation, implicitly assuming attackers do not train their models any further. In this paper, we argue that a more realistic threat model includes further training with reinforcement learning after distillation. A misspecified threat model can give a false sense of security -- some defenses that seem effective after distillation can be broken after subsequent reinforcement learning. Practically, reinforcement learning lowers the bar for a distillation attack to be effective. We show that simple attacks can steal reasoning capabilities from existing closed-source language models using data easily obtainable from current APIs, yielding reasoning improvements equivalent to more sophisticated attacks that extract the full hidden traces. Results indicate that any distillation defense that leaks sufficient information to reconstruct approximate reasoning traces is likely ineffective. We conclude by discussing broader implications and batch-level distillation defenses which could be more effective.
\end{abstract}

\vspace{-3pt}
\section{Introduction}
\vspace{-3pt}
In early 2025, DeepSeek-R1 demonstrated that a Large Language Model (LLM) trained outside of Google, Anthropic, and OpenAI could outperform those trained by frontier labs for the first time \citep{deepseek}. This led to allegations accusing DeepSeek of using outputs obtained from these labs' models to improve its performance \citep{nytimes_openai_deepseek_2025,guardian_openai_deepseek_2025}, allegations that were later rebutted \citep{deepseek-nature, gibney2025secrets} and then resurfaced again \citep{anthropic, google, Seetharaman2026DeepSeekDistillation}. DeepSeek was accused of executing a distillation attack -- cheaply copying a closed-source model's reasoning capabilities by systematically collecting large quantities of high-quality reasoning traces and then training (``distilling'') its own LLMs on these traces (see  \Cref{fig:illustrate_distillation_attacks}).

\begin{figure}[!ht]
\vspace{-5pt}
    \centering
    \includegraphics[width=1\textwidth]{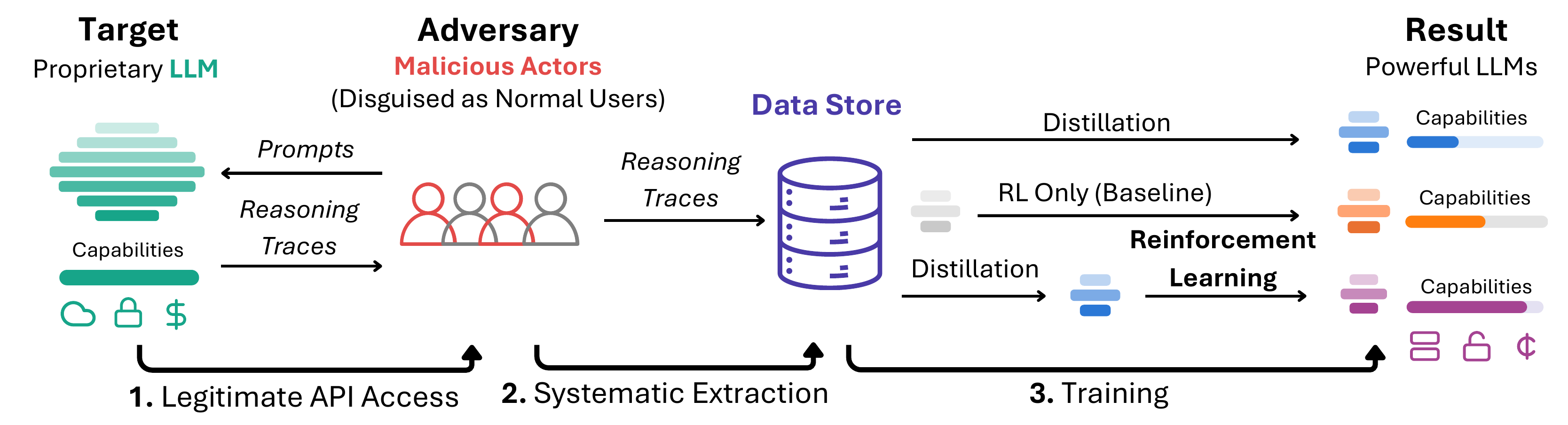}
    \caption{\textbf{A distillation attack's pipeline.} Attackers amass large volumes of reasoning traces from proprietary LLMs and train (``distill'') their own models on these traces, followed by further training of distilled models with reinforcement learning (RL). Existing threat models assume attackers train models only with distillation, while a realistic distillation attack likely follows distillation with RL.
    }
    \label{fig:illustrate_distillation_attacks}
    \vspace{-5pt}
\end{figure}

Distillation attacks let attackers quickly and cheaply curate high-quality LLM training data, and can allow malicious actors to access powerful AI models without built-in safety guardrails \citep{google,antidist_news}. Guardrail-free models are especially dangerous with the rising capabilities of agentic AI, as demonstrated in recent accidental cybersecurity attacks \citep{openai2026huggingface,anthropic2026investigating}.

Language models are trained to solve difficult, verifiable problems using reinforcement learning (RL). RL is the most compute-intensive part of the LLM post-training pipeline \citep{llama, qwen, deepseek}, and works by having a model generate many responses to difficult questions and then upweighting correct answers while downweighting incorrect answers \citep{grpo}. Although the training pipelines used in realistic distillation attacks are unknown, existing threat models implicitly assume that attackers do not perform any RL following a distillation attack, or that defenses effective after distillation remain effective after RL. Accordingly, previous work studying distillation attacks evaluates defenses by measuring an attacker's performance immediately post-distillation \citetext{\citealp{antidist_sampling,antidis_fingerprinting,zhang2026stealreasoningreasoningtraces,doge_defense,libon2026what};\citealp[Sec.~5.1.1]{anthropic2026riskreport}}.

In this paper, we argue that existing threat models of distillation attacks are misspecified, resulting in existing distillation defenses giving a false sense of security. Specifically, we provide evidence that a realistic attack pipeline likely includes reinforcement learning training after distillation. Defenses that seem effective in post-distillation evaluations may be rendered ineffective after additional reinforcement learning, including those currently widely deployed in production systems.

We find that reinforcement learning lowers the bar for a distillation attack to be effective. Very simple attacks that reconstruct approximate reasoning traces break existing deployed defenses, namely those used by GPT, Claude, and Gemini models -- returning reasoning trace summaries instead of full traces. After RL training, summary-reconstructed reasoning traces yield performance improvements similar to distilling over full, unsummarized traces. These results imply that any distillation defense that leaks sufficient information to reconstruct reasoning traces will likely be ineffective. Reasoning traces are harder to reconstruct by omitting information from generated answers, but omissions would damage user experience due to dual-use; for example, a researcher can ask a model to generate a proof as part of their research or an attacker can ask the same to improve their own model's capabilities. Removing steps of the proof would harm both the researcher and the attacker alike.

\looseness=-1  This paper's contributions are presented as follows. In \Cref{sec:rl_in_threat_model} we show that models trained with reinforcement learning tend to outperform the same model trained with distillation. Furthermore, models trained with both distillation and reinforcement learning perform best. Realistic distillation attacks carried out by capable attackers are thus likely to train models with a combination of distillation and reinforcement learning. In \Cref{sec:false_security} we show that a well-known defense -- antidistillation sampling -- can seem effective if evaluated only after distillation, while additional reinforcement learning can lead to the defense being broken. \Cref{sec:rl_makes_simple_attacks_work} shows that threat models including RL render very simple attacks effective, allowing reasoning capabilities to be copied using reasoning trace summaries and answers given by closed-source models. Finally, in \Cref{sec:potential_defenses} we discuss potential defenses and future work.

\textbf{Responsible Disclosure}. This work aims to develop more realistic threat models for existing distillation attacks, rather than to propose new attacks. All results have been disclosed to the relevant model providers (Google, OpenAI, and Anthropic), who have approved sharing these results with the wider community. The authors believe many of these results are likely already known to attackers and hope they may be used to further empower defenders.

\vspace{-5pt}
\section{Prior Work}
\vspace{-3pt}
\textbf{Existing attacks and defenses.} All known prior work studying distillation attacks has evaluated attacks, defenses, and mitigations after distillation, but without any subsequent reinforcement learning, both in academic studies \citep{antidist_sampling,antidis_fingerprinting,zhang2026stealreasoningreasoningtraces,doge_defense,libon2026what} and in evaluations by frontier labs \citep[Sec.~5.1.1]{anthropic2026riskreport}. \citet{libon2026what} discuss how different kinds of threat models can change whether a distillation defense is effective, but without assuming any additional training after the distillation. \citet{zhang2026stealreasoningreasoningtraces} demonstrate an attack to steal reasoning capabilities from closed-source models by training an expander to map summaries back into full traces. The attack in \Cref{sec:rl_makes_simple_attacks_work} is simpler, as the expander requires no training, and more importantly seems similar to the existing, reported attacks, based on the little information that is publicly known \citep[see ``Illicit distillation and scaled abuse'']{anthropic2026countering}. \citet{panfilov2026stealing} show that the encrypted or signed reasoning that closed-source APIs return alongside an answer can be replayed into a cheaper decoder model from the same family, which then reads the capable model's hidden reasoning out verbatim.

\textbf{Distillation versus RL.} One reason why previous works assume no reinforcement learning is done after distillation may be due to there being much confusion in the literature on the comparison between distillation and RL. Some works argue that distillation typically outperforms RL \citep{leap, distill-vs-rl, efficient} while others argue that RL typically outperforms distillation \citep{rl-generalizes, math-reasoning-improve}, and some work observes both results in different model sizes \citep{deepseek}. Specific model families, like some Qwen models, are known to lead to conclusions regarding RL that do not generalize to other model families \citep{shao2026spuriousrewardsrethinkingtraining}. To ensure our conclusions are robust, we test methods either across several model families or over a model family that is known to not exhibit spurious performance improvements.

There are also some prevalent intuitions arguing why one would expect distillation to outperform reinforcement learning. Because distillation has a denser learning signal than the binary rewards used in reinforcement learning from verifiable feedback, one would expect distillation to lead to larger improvements than reinforcement learning \citep{lecun-cake,lora}.

Regardless of this confusion, many existing LLM training pipelines combine distillation followed by RL, often in sequence several times \citep{deepseek-nature,qwen, sft-then-rl, efficient, deepcoder2025,deepscaler2025, chem}. For example, \citet{deepseek-nature} discuss successively using distillation to bootstrap their model, training with reinforcement learning, and then using the RL-trained model to generate better data for further bootstrapping, performing overall three cycles of distillation followed by RL. In a realistic distillation attack, data attained from a closed-source model would likely supplement or entirely replace the data used for bootstrapping.

\vspace{-7pt}
\section{Threat Models for Realistic Distillation Attacks}
\vspace{-5pt}
\label{sec:rl_in_threat_model}
Distillation attacks are believed to be carried out by rival industrial labs that desire to train a state-of-the-art LLM. Presumably, an attacker would use all the resources at their disposal, such as API calls or GPU hours, to train the best-performing model possible. Prior work evaluating defenses against attacks evaluates the attacker's model immediately after distillation. This threat model implicitly assumes no further training after distillation, or that defenses would persist throughout any additional training. In this section, we argue that a more realistic threat model includes reinforcement learning after the distillation.

It makes sense to assume that a capable attacker executing a distillation attack performs no further training after distillation only if alternatives to distillation perform.\footnote{This assumes performance is the only metric that matters -- one may wish to train a model further for other purposes, e.g., giving friendlier answers.} The two main methods to endow a model with reasoning capabilities are distillation \citep{s1} and reinforcement learning \citep{deepseek-nature}, yet from the literature, which performs best is unclear. To see if a realistic distillation attack likely uses any training after the distillation, we train a set of models using either distillation or RL and compare which performs best.

In distillation, typically a large, capable ``teacher'' model is distilled into a smaller, less capable ``student'' \citep{distillation}. In a distillation attack, the teacher is the closed-source model while the student is the attacker's model. Throughout the paper, teacher/student models always analogously refer to the closed-source/attacker's model in an attack.

\textbf{Technical details.} Unless specified otherwise, all experiments use the datasets, prompts, and RL framework of \citet{simplerlzoo}, which does RL training using GRPO \citep{grpo}. For fairness, both distillation and RL training use the same sets of questions. We evaluate existing model checkpoints from \citet{simplerlzoo} where available; otherwise, we train models using the same framework. Distillation with subsequent RL is performed only for models with at most 3B parameters, as compute limitations prevented RL training on larger models. Distillation is done using sequence-level knowledge distillation \citep{seq-kd}, which corresponds to supervised fine-tuning over the teacher's generated answers, as is common practice \citep{s1, deepseek}. Throughout the paper, we denote base models trained using RL by appending ``-RL'' to their names: for example, the base model Qwen2.5-14B trained with RL is Qwen2.5-14B-RL. For all distilled models, the teacher used to generate traces for distillation is Qwen2.5-14B-RL, except for Qwen2.5-0.5B, where the teacher is Qwen2.5-1.5B-RL. Additional details and ablations are discussed in Appendix \ref{app:distill-rl}.

\begin{figure}
    \centering
    \includegraphics[width=\linewidth]{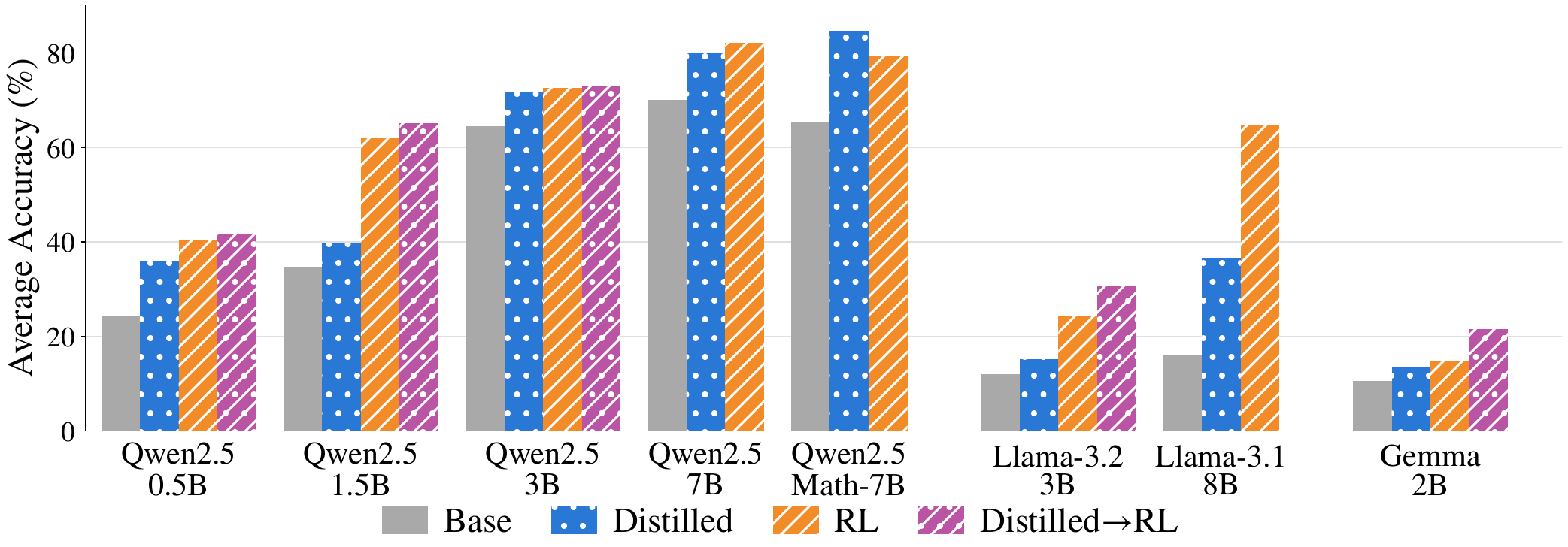}
    \vspace{-18pt}
    \caption{\textbf{RL outperforms distillation, with distillation followed by RL outperforming both.} An attacker aiming to train a state-of-the-art model would likely use distillation to bootstrap subsequent RL, rather than relying solely on distillation. Accuracies are averages over the GSM8K \citep{gsm8k}, Minerva Math \citep{minverva_math}, and MATH500 \citep{math500} datasets.}
    \label{fig:distill_v_rl}
    \vspace{-15pt}
\end{figure}

\textbf{Results.} \Cref{fig:distill_v_rl} shows the performance of various language models when trained using reinforcement learning or distillation over the same dataset. Surprisingly, in almost all cases, distillation underperforms reinforcement learning.\footnote{The one case where RL underperforms distillation, the Qwen2.5 Math 7B model, is an outlier that is analyzed in Appendix \ref{app:explain-outlier}.} Most notably, distillation followed by reinforcement learning outperforms either training method separately. In this sense, distillation bootstraps and improves subsequent reinforcement learning. Thus, an attacker desiring state-of-the-art performance would likely use distillation followed by RL.

\begin{figure}[!b]
    \vspace{-15pt}
    \centering
    \includegraphics[width=\linewidth]{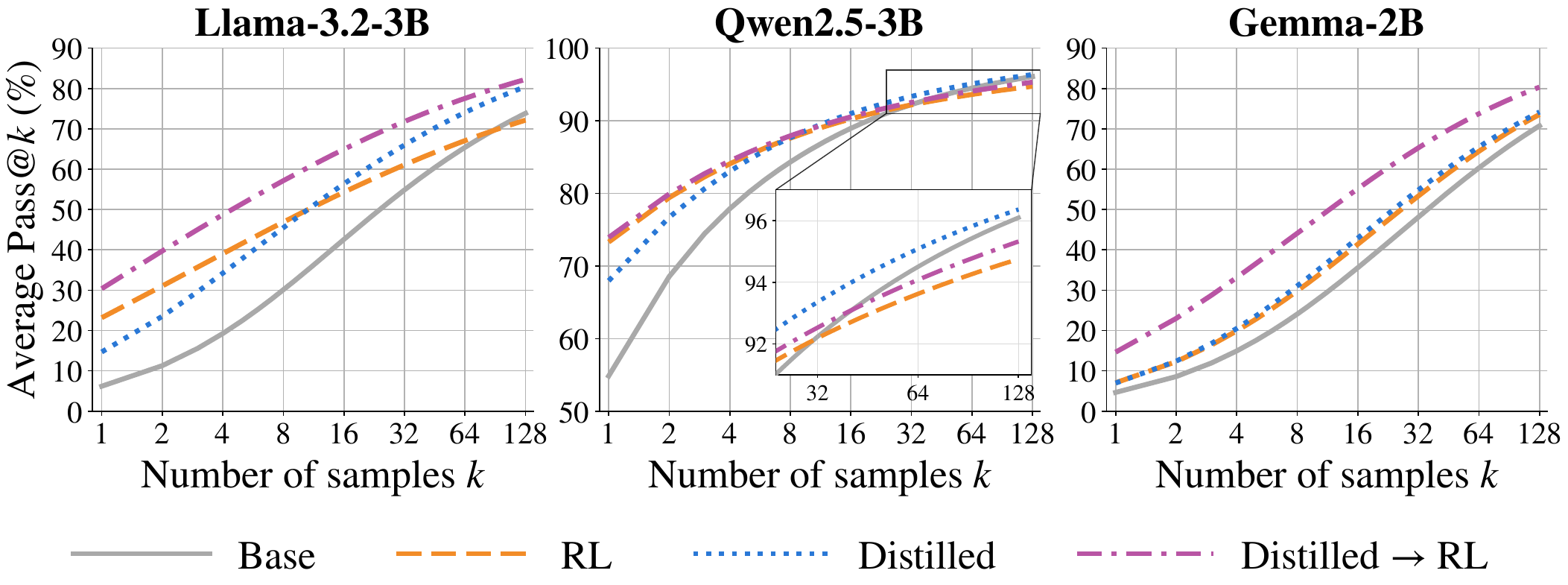}
    \vspace{-16pt}
    \caption{\textbf{Distillation improves pass@$k$ for high $k$, allowing futher RL to lead to accuracy improvements (pass@1)}. In this sense, distillation bootstraps subsequent RL training. Pass@$k$ is averaged over the GSM8K, Minerva, and MATH500 datasets. A detailed breakdown of results is available in Appendix \ref{app:more-details-rl-vs-distill}.}
    \label{fig:pass_at_k}
    \vspace{-9pt}
 \end{figure}
\textbf{Distillation bootstraps RL.} A more informative metric than a model's accuracy for how different kinds of training affect its output distribution is pass@$k$. Pass@$k$ measures the probability that a model answers a question correctly within $k$ attempts \citep{passatk}. Pass@1 is hence a model's accuracy, while pass@$k$ gives more weight to problems that require many attempts to answer correctly. For high $k$, pass@$k$ can be interpreted as a soft performance ceiling or ``reasoning boundary'' \citep{leap}, indicating which problems a model can solve given a large but finite number of attempts. Parts of the following analysis exist in many different forms in prior works \citep{leap,deepseek-nature}, with it being included here only to give additional intuition.

\Cref{fig:pass_at_k} shows that although RL outperforms distillation in improving base-model accuracy (pass@1), distillation outperforms RL in raising a model's soft performance ceiling (pass@$k$). Distillation followed by RL outperforms all methods in improving accuracy, while sometimes also achieving a higher pass@$k$.

\textbf{Theory.} There are theoretical arguments for why one would expect RL to outperform distillation and why distillation is expected to increase a model's pass@$k$ while RL may be expected to decrease it. Such theory is useful both in helping to understand previous results and in giving evidence for why one would expect to see similar trends in other settings, e.g., larger models. We describe the intuition behind the theory here, with exact results and a full discussion in Appendix \ref{app:theory}.

\begin{wrapfigure}{r}{0.6\linewidth}
    \vspace{-16pt}
    \centering
    \includegraphics[width=\linewidth]{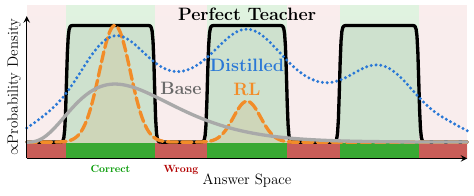}
    \vspace{-24pt}
    \caption{Relative to a \textbf{perfect teacher}, \textcolor{rlColor}{\textbf{RL}} (dashed) is mode-seeking while \textcolor{distilledColor}{\textbf{distillation}} (dotted) is mode-covering. Green regions represent correct answers, yellow represents incorrect. Probability densities are presented unnormalized. In practice, RL is more limited by the initial distribution of the \textcolor{baseTextColor}{\textbf{base}} model being trained than distillation.}
    \label{fig:mode_behavs}
    \vspace{-10pt}
\end{wrapfigure}
When viewed from a probabilistic perspective, language models define a distribution over possible outputs. The cross-entropy loss used in distillation is mode-covering, where the loss highly penalizes the model for giving a low probability when the teacher gives a high probability. This training signal leads the distilled student model to spread its probability mass more broadly, increasing the probability of previously unlikely answers, but potentially leading to more mistakes (see distillation (dotted) curve in Figure \ref{fig:mode_behavs}). RL, on the other hand, is known in some cases to be mode-seeking \citep{levine2018reinforcement}, which concentrates probability mass on correct answers while potentially making some low-probability solutions even lower.

\vspace{-3pt}
\section{RL-Free Evaluations Can Give a False Sense of Security}
\vspace{-3pt}
\label{sec:false_security}
Defenses against distillation attacks are typically evaluated by measuring a distilled model's performance when distilled on a teacher model's traces, with and without the defense being employed. A defense is deemed effective if it meaningfully degrades the attacker model's performance. Here we show that RL-free evaluations can sometimes create a false sense of security: defenses that seem effective after distillation can be ineffective after a model is subsequently trained with RL, with performance gaps vanishing after reinforcement learning.

To illustrate, we evaluate antidistillation sampling \citep{antidist_sampling} as a case study, both after distillation and after distillation followed by RL. Antidistillation sampling is a well-known defense against distillation attacks that adversarially perturbs a teacher model's outputs to degrade a distilled student's performance on a chosen downstream task, thereby ``poisoning'' the attacker. Higher levels of poisoning further degrade the student, at the cost of also degrading the teacher's outputs.

We evaluate Qwen2.5-0.5B \citep{qwen-2.5} distilled on traces generated by a teacher using different levels of poisoning from antidistillation sampling. We use the Qwen2.5-1.5B-RL model as the teacher. Low, mild, and high levels of poisoning are used, which effectively reduce the teacher's relative performance by 10\%, 30\%, and 86\% on the dataset the student is distilled over. These poisoning levels are all aggressive, as in practice even a 10\% relative reduction in teacher performance would likely be too detrimental for the defense to be deployed.

\begin{figure}[h]
    \centering
    \includegraphics[width=\linewidth]{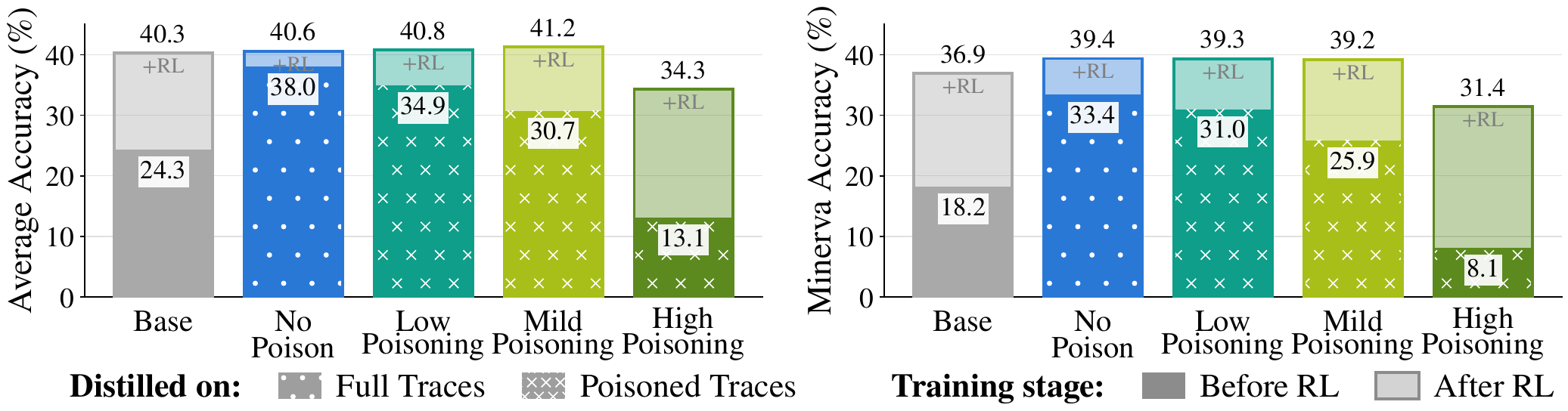}
    \vspace{-15pt}
    \caption{\textbf{Antidistillation sampling can be effective following distillation but break following RL.} Antidistillation sampling modifies teacher traces to ``poison'' distilled students, degrading their performance after distillation. A Qwen2.5-0.5B student is distilled normally and with low, mild, or high poisoning levels; after RL, low- to mildly poisoned student models close the performance gap with the unpoisoned model. An attacker using RL would benefit similarly from antidistillation-sampled data as from regular data. Left: the average accuracy over the GSM8K, MATH500, and Minerva datasets. Right: the performance over Minerva, which is harder than GSM8K and similar in difficulty to MATH500. Additional details are provided in Appendix \ref{app:rl-free}.}
    \label{fig:antidist_qwen05}
    \vspace{-11pt}
\end{figure}

\Cref{fig:antidist_qwen05} shows how antidistillation sampling degrades the student's performance following distillation relative to the unpoisoned student. However, after reinforcement learning, the unpoisoned and low to mildly poisoned students have almost the same performance. Notably, this performance is higher than what the attacker would have achieved if reinforcement learning was done without any distillation. Sufficiently high levels of poisoning lead to degradations that persist after reinforcement learning but are effectively due to the student having a very bad teacher -- see Appendix \ref{app:illustrate-traces} for qualitative examples.

\vspace{-7pt}
\section{RL Makes Simple Distillation Attacks Effective}
\vspace{-5pt}
\label{sec:rl_makes_simple_attacks_work}
\looseness=-1 The previous section showed that a distillation defense, when evaluated out-of-the-box, can seem effective after distillation, while being ineffective after subsequent reinforcement learning. This section shows that even for stronger defenses, simple attacks that seem ineffective after distillation can be effective after RL. We demonstrate this by distilling reasoning capabilities from existing deployed closed-source language models, namely Claude Sonnet 4.6, GPT-5 mini, and Gemini Flash 3.6.

Most deployed closed-source language models protect reasoning data by separating it into a hidden reasoning trace and final user-facing answer. To defend against distillation attacks on reasoning traces while still providing transparency into models' deliberation mechanisms, proprietary APIs return only a summarized version of the model's reasoning to the user \citep{anthropic2026riskreport, openai2026reasoning}, while the model's final answer is returned as-is. Intuitively, as the summary condenses the full reasoning trace's semantic content, a weak off-the-shelf model should be able to leverage the summaries and final answers to reconstruct semantically similar full reasoning traces. For example, it is much easier to solve a math problem given hints on useful steps than from scratch.

\begin{figure}[!ht]
    \centering
    \vspace{-5pt}
    \includegraphics[width=\linewidth]{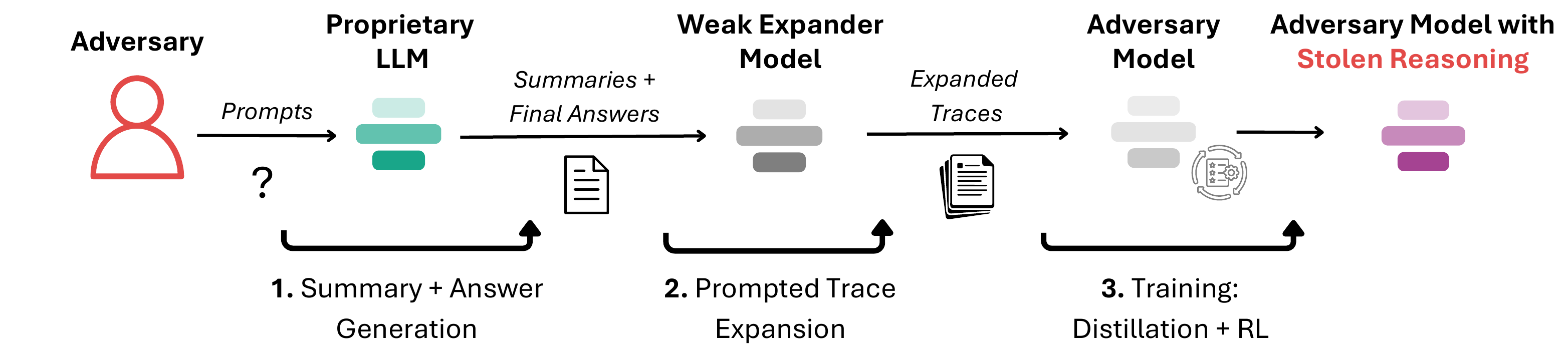}
    \vspace{-12pt}
    \caption{\textbf{A simple attack to approximately reconstruct closed-source models' reasoning traces.} Note that the reconstructed traces need not be faithful to the hidden, typically unknown, full reasoning traces, but only similarly useful for bootstrapping a model's reasoning.}
    \vspace{-4pt}
    \label{fig:attack}
\end{figure}

Building on this intuition, we evaluate the following attack, illustrated in \Cref{fig:attack}. A model's summarized reasoning and answers can be given to a weak ``expander'' model which attempts to reconstruct full reasoning traces. The attacker's model is then distilled on these expanded traces in lieu of the closed-source teacher's hidden reasoning. If the attack is successful, after distillation and reinforcement learning, the attacker's model should achieve similar performance regardless of whether it was distilled on reconstructed, expanded traces or on the original, typically hidden traces.

We first test this attack in an open-source setting where the full reasoning traces are available, so we can easily compare the model's performance when distilled over expanded traces versus the original, full reasoning traces. We use Qwen2.5-14B-RL as the teacher and distill it into Llama-3.2-3B-Base. Llama is chosen as the attacker's model, as in realistic distillation attacks, models come from different model families. Additionally, it is harder to get spurious performance improvements from RL training on Llama models than for models from some other families \citep{shao2026spuriousrewardsrethinkingtraining}, such as Qwen. We use Llama-3.2-3B-Instruct as the weak expander, which notably is not trained using reinforcement learning. Summaries are obtained by prompting Qwen2.5-7B-Instruct to summarize the teacher's reasoning traces, with an in-context example to ensure each reasoning step is described in natural language and includes key results, but no full mathematical derivations. Additional details are available in Appendix \ref{app:attack-details}, as well as a qualitative example of a trace and its summary in Appendix \ref{app:illustrate-traces}.

\Cref{fig:distill_open} shows that although after distillation the performance is lower over the expanded traces, following reinforcement learning there is essentially no gap relative to training on the original full traces. As distillation attacks likely target difficult problems where the closed-source teacher is much more capable than the attacker's student, we chiefly compare results over hard datasets, namely Minerva and MATH500, without GSM8K. Empirically, distilling over the expanded summaries sometimes leads to more forgetting on GSM8K than distilling over the full traces -- see Appendix \ref{app:ablations} for details.

\begin{figure}[!ht]
    \centering
    \vspace{-6pt}
    \includegraphics[width=\linewidth]{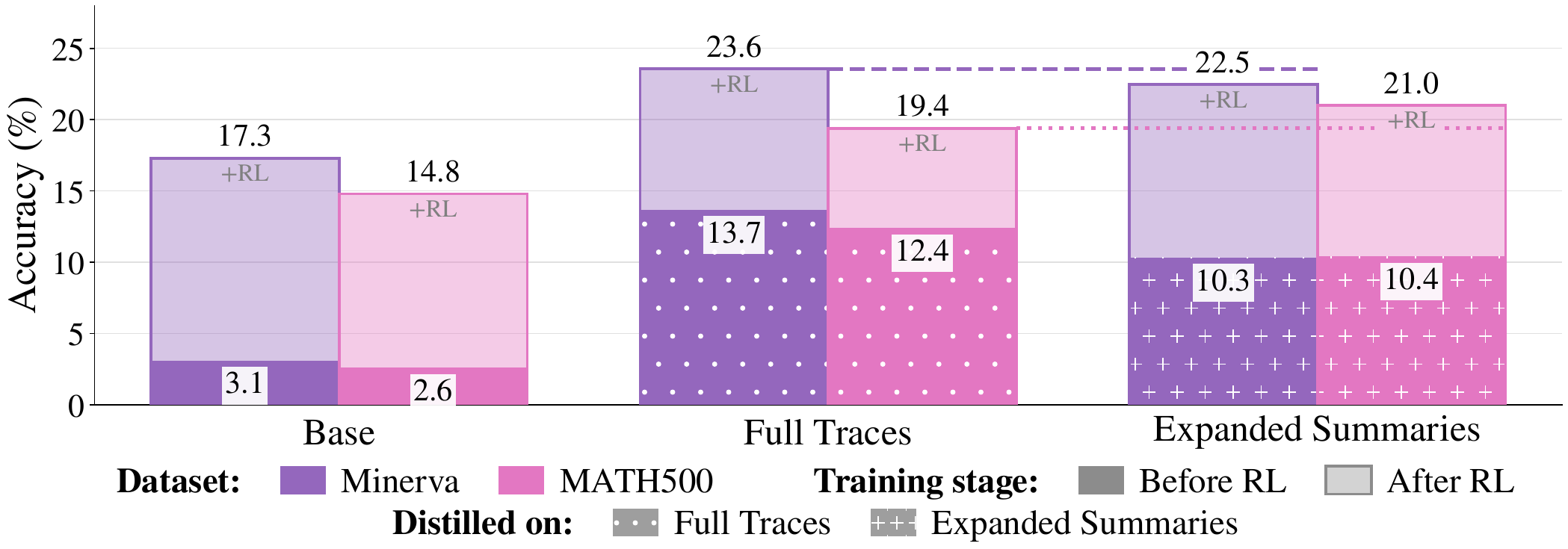}
    \vspace{-10pt}
    \caption{\textbf{Summaries leak sufficient information to distill reasoning capabilities equal to those achieved using full traces.} After reinforcement learning, the model distilled on expanded summaries (right) performs similarly to the model distilled on full, unobfuscated reasoning traces (middle). Distilling on either full traces or expanded summaries decently outperforms no distillation (left). Open-source setting, with Qwen2.5-14B-RL as the attacked teacher, and Llama-3.2-3B base as the attacker's model.}
    \label{fig:distill_open}
    \vspace{-10pt}
\end{figure}

\subsection{Stealing Closed-Source Reasoning}
We use a similar attack against Claude Sonnet 4.6, GPT-5 mini, and Gemini Flash 3.6, with some minor changes. Summaries are obtained directly as part of the response given by the model APIs. Summaries from all three model providers are stylistically different, with Claude summaries being the most concise, GPT-5 mini summaries the most verbose, and Gemini's having many personified expressions and verification. To ease processing, we first prompt the expander to stylistically rephrase the answers and summaries, before querying it again with the homogenized summaries to generate approximate reasoning traces. We forgo attacking the model providers' most capable models due to being unable to RL train more capable LLMs than those with 3B parameters and due to safety concerns, so we do not demonstrate an attack against frontier models. Additional technical details and ablations are discussed in Appendix \ref{app:attack}.

To test whether distilling on expanded traces performs similarly to distilling on full traces, we extract full traces from Claude Sonnet 4.6 and GPT-5 mini using a disclosed variant of the attack demonstrated by \citet{panfilov2026stealing}. \citet{panfilov2026stealing} describe an attack to extract unsummarized reasoning traces from a closed-source language model by jailbreaking a weaker model from the same family and asking it to output some encrypted reasoning. Gemini models had the extraction attack patched when the experiments were performed, and are thus excluded. All results were responsibly disclosed to the relevant model providers. See Appendix \ref{app:attack-details} for additional details.

\Cref{fig:distill_closed} shows that, following reinforcement learning, the simple trace expansion attack recovers performance similar to that given by full traces from the closed-source models. Although Gemini does not have a full-trace baseline, results from the open-source and two other closed-source settings indicate that the simple attack would likely be effective against Gemini as well.

\begin{figure}[h]
    \vspace{-5pt}
    \centering
    \includegraphics[width=\linewidth]{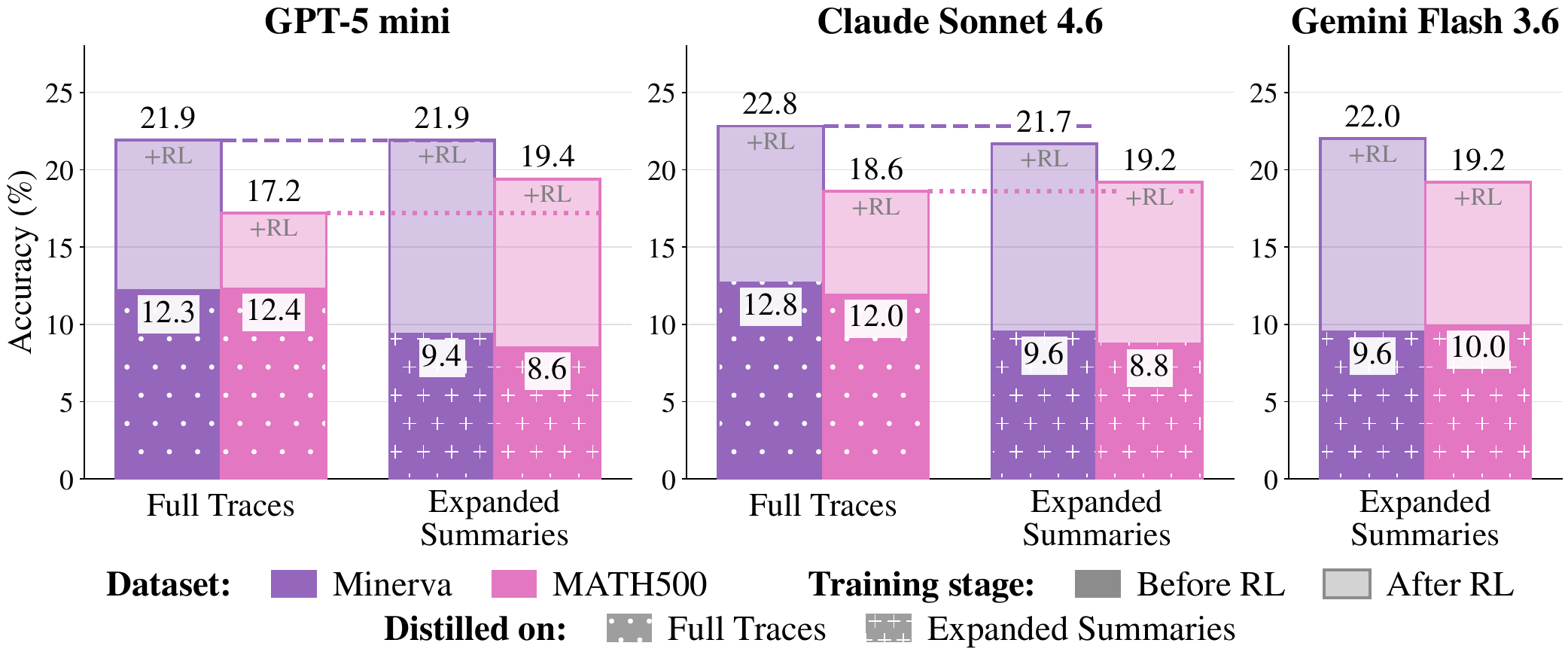}
    \vspace{-15pt}
    \caption{\textbf{Summaries leak sufficient information to distill reasoning capabilities from closed-source models.} Full traces were obtained with the extraction attack of \citet{panfilov2026stealing} (Appendix \ref{app:extraction}), excluding Gemini models, which had the extraction attack patched at the time of writing. After reinforcement learning, a base model distilled on expanded summaries performs similarly to the same model distilled on full traces. Expanded summaries are constructed using information readily available through the model APIs.}
    \label{fig:distill_closed}
    \vspace{-10pt}
\end{figure}

\vspace{-5pt}
\section{Potential Defenses}
\vspace{-3pt}
\label{sec:potential_defenses}
In this section, we discuss which kinds of defenses could be effective against realistic distillation attacks and should be developed in future work.

\vspace{-3pt}
\subsection{Response-level Defenses}
\vspace{-2pt}
Summarizing reasoning traces and antidistillation sampling are both examples of \textbf{broad response-level defenses}, where the defense is applied over a single API call regardless of the domain. As far as the authors are aware, the only real-time defenses currently deployed by model providers are at the response-level. Existing defenses are likely ineffective, evidenced not only by this work but more broadly by the numerous reports of successful distillation attacks throughout 2026 \citep{google,anthropic,anthropic2026countering}. Based on the little publicly released information, it is plausible that attackers have been using attacks similar to the one disclosed here \citep[see ``Illicit distillation and scaled abuse'']{anthropic2026countering}.

More broadly, response-level defenses are likely incapable of preventing distillation attacks due to dual-use. If processed model outputs contain the same semantic information as the full reasoning traces, then it should be possible to reconstruct approximate full traces. The more information is given in the processed traces, the easier it is to reconstruct full traces using simple attacks like the one disclosed here. Reconstructed traces need only be sufficiently useful to teach a model how to solve harder problems than it currently can, as subsequent reinforcement learning teaches the model to do so more reliably.

While it is possible to remove information from a model's output, this would directly lead to a worse user experience, making such a defense inviable. For example, at the response-level, it is difficult to differentiate between a researcher asking for steps on how to prove a theorem versus an attacker extracting data to improve their model's theorem-proving capabilities. Omitting steps of the proof would require more effort from the attacker while worsening the researcher's experience. In most cases, worsening a user's experience for a safer model is likely impractical due to the competition among model providers -- users would prefer a worse-defended but better model over a model which is well defended but harder to use.

Defenses that try to adversarially obfuscate a model's outputs without harming performance would likely be ineffective. One such defense is antidistillation sampling. We found that levels of antidistillation that reduce the distilled model's performance typically also significantly degrade the teacher, making antidistillation impractical to deploy -- see Appendix \ref{app:llama-anti}. More broadly, methods relying on subtle patterns in generated text are known to be brittle, often breaking when the text is rephrased. This has been widely studied in the context of watermarking text generated by language models \citep{sadasivan2025can}.

There are, however, \textbf{\textit{domain specific} response-level defenses} which are sensible and already being deployed. For example, \citet{anthropic2026fablemythos} have safeguards preventing Claude Fable 5 from answering cybersecurity queries, which have reportedly foiled an attempted distillation attack where a rival model provider wished to use Fable to improve their model's cyber capabilities \citep[GTG-16006]{anthropic2026countering}. For the given domain, the tradeoff between security and model usability is sensible.

Moreover, summaries and other methods may still be useful as \textbf{response-level mitigations}. Summaries are reported to prevent models from leaking sensitive information such as memorized passwords and API keys \citep{panfilov2026stealing}, and still require an attacker to postprocess the attacked model's outputs.

\vspace{-5pt}
\subsection{Batch-level Defenses}
\vspace{-3pt}
While it is difficult to tell whether a single query is part of a distillation attack, it might be easier given a set of related queries. Based on available reports, existing \textbf{batch-level defenses} likely only detect attacks currently in hindsight and do not prevent them in real time \citep{anthropic}. How to develop effective real-time batch-level defenses is nontrivial, as attackers are reported to use multiple accounts operating from different locations \citep{anthropic}, making it difficult to detect related queries. However, developing such defenses is a very important avenue for future work.

More broadly, batch-level defenses are likely relevant for a wide class of vulnerabilities, beyond distillation attacks. There have been reported attacks that use a closed-source model to improve a rival model without distillation, e.g. by using the closed-source LLM as a reward model for RL \citep{anthropic}. Boundary Point Jailbreaking (BPJ, \citep{davies2026boundary}) is an attack that automatically finds jailbreaks for a closed-source LLM using many queries. \citet{davies2026boundary} also advocate for batch-level defenses, as response-level defenses struggle to protect against their attack. Thus, if successfully implemented, batch-level defenses could protect against a large class of open vulnerabilities.

\vspace{-6pt}
\section{Limitations}
\vspace{-3pt}
Several uncertainties remain regarding our findings. All experiments use relatively small models, while realistic distillation attacks likely use models with orders of magnitude more parameters. However, there have been documented cases of distillation attacks against smaller closed-source models, including those studied here \citep{cisa2026distillation}. Even in larger settings, it is likely that a realistic threat model includes reinforcement learning after the distillation, and studies training larger models using distillation followed by reinforcement learning provide evidence that distillation helps bootstrap the subsequent RL \citep{deepseek-nature,qwen,efficient}. Moreover, information in reasoning traces is leaked by summaries regardless of model size; whether that information is qualitatively different between small and larger models in a way that would affect distillation attacks remains to be shown.

Similarly, all experiments focus on math reasoning, while realistic distillation attacks are likely also over other tasks with verifiable solutions, such as long-horizon agentic coding. Because summaries leak substantial information also in agentic setups, with the model producing several intermediate outputs instead of a single final answer, we believe attacks akin to the one proposed here would be similarly viable. Such attacks may already have been used, as attacks have been reported that aim to distill capabilities from existing agentic setups \citep{anthropic,anthropic2026countering}, where the available information is only summaries and intermediate answers.

The attack proposed here is unoptimized, and we do not expect it to be in any sense optimal. It should only be taken to show that simple attacks, given readily available information from the APIs and a realistic threat model, are likely sufficient to steal reasoning capabilities. More performant attacks would potentially yield better reasoning improvements or require less from the attacker. The postprocessing given by the expander model is both computationally and economically cheap, certainly compared with additional reinforcement learning and likely relative to other ways of acquiring more high-quality training data. Thus, while it is possible to reduce the postprocessing, there is likely no good reason to do so.

\subsection*{Reproducibility statement}
Details necessary to reproduce all experiments are given in the main text and throughout the appendices. All code is available at \url{https://github.com/sjavaheri/treason}, except for the code used to extract full reasoning traces from closed-source models.

\subsubsection*{Acknowledgments}

We would like to express our gratitude to Ilia Shumailov for discussions throughout this work that helped with both research ideation and the framing of results within a wider security context. Discussions with Nick Rhinehart were invaluable in helping us understand why RL was outperforming distillation, the relationship between behavioral cloning and on-policy RL, and led to the theory in \Cref{app:theory}. We are indebted to Xander Davies and Jai Patel of the UK AI Security Institute, who helped us acquire the compute which made much of this project possible. Thanks to Anushka Nair for reviewing a workshop version of this paper. We would like to thank Lena Libon for discussions on different kinds of threat models and connecting the Oxford team with Alexander, which led to a fruitful collaboration.

The authors acknowledge the use of resources provided by the Isambard-AI National AI Research Resource (AIRR) \citep{mcintoshsmith2024isambardaileadershipclasssupercomputer}. Isambard-AI is operated by the University of Bristol and is funded by the UK Government's Department for Science, Innovation and Technology (DSIT) via UK Research and Innovation; and the Science and Technology Facilities Council [ST/AIRR/I-A-I/1023]. Yonatan is funded by the Rhodes Trust and the AIMS EPSRC CDT (grant no. EP/S024050/1).

\bibliography{iclr2027_conference}
\bibliographystyle{iclr2027_conference}

\newpage
\appendix
\startcontents[appendices]
\printcontents[appendices]{}{0}{\section*{Appendix Table of Contents}}

\newpage

\section{Comparing Distillation and Reinforcement Learning}
\vspace{-5pt}
\label{app:distill-rl}

In this section, we discuss the implementation details for the comparison between RL and distillation in \Cref{sec:rl_in_threat_model}, analyze the one outlier that improves more with distillation than with RL (Qwen2.5-Math-7B), discuss some theory on the distillation and RL losses, and present some ablations that act as sanity checks for the results.

\vspace{-5pt}
\subsection{Implementation Details}
\vspace{-5pt}
\label{app:distill-rl-details}

\textbf{Models}. We run experiments on base models of varying sizes from the Qwen2.5 (0.5B, 1.5B, 3B, 7B, Math-7B) \citep{qwen-2.5}, Llama3 (3.2-3B and 3.1-8B) \citep{llama}, and Gemma 1 (2B) \citep{gemma} families. We use the SimpleRLZoo framework for all RL experiments \citep{simplerlzoo}, evaluating existing checkpoints when available, and otherwise training models ourselves. Compute limitations restrict our own RL training to models with at most 3B parameters.

\textbf{Datasets and prompts}. For each base model, we use the same dataset of questions and the same prompts for distillation and reinforcement learning. Following the approach and datasets used by \citet{simplerlzoo}, we train weaker base models on the medium-difficulty SimpleRLZoo dataset with a ``simple'' prompt, while we train more capable models on the harder SimpleRLZoo dataset with a ``complex'' prompt. Weak base models include Qwen2.5-0.5B and all models in the Llama3 and Gemma families. Teacher model traces are always generated with the ``complex'' prompt. Further details on the framework, including the explicit prompts used, are available in \citet{simplerlzoo}.

\textbf{Distillation hyperparameters}. Distillation experiments and hyperparameters follow the setup of \citet{s1} and are the same for all models. We used a learning rate of $10^{-5}$ with weight decay of $10^{-4}$, along with a cosine learning rate scheduler with a warmup ratio of $0.1$. Gradients are accumulated in steps of 16, with a batch size of 1 per device. All distillation is performed over $1$ training epoch. Teacher traces are generated using a temperature of $T=1$, with one generated trace per question in the dataset, with the distillation being for $1$ epoch.

\textbf{Reinforcement learning hyperparameters}. We run RL training with the SimpleRLZoo framework on either 4 A100 or 4 H100 GPUs with an effective batch size of 1024. All hyperparameters match those of \citet{simplerlzoo}, except for using 4 GPUs instead of 8, reducing the validation batch size from 500 to 256, and reducing the maximum context length to 2048. We found that responses were not truncated and using a longer context length of 8192 did not make a difference.

\textbf{Evaluations}. We evaluate all models on the GSM8K \citep{gsm8k}, Minerva Math \citep{minverva_math}, and MATH500 \citep{math500} datasets using the LM Evaluation Harness \citep{eval-harness}, with all evaluations at $T=0$. We extract model answers and compare them with the correct solutions using the harness's built-in implementation of math-verify \citep{math_verify}, assessing reasoning correctness rather than conflating it with instruction-following and formatting abilities. All evaluations allowed LLM responses up to 2048 tokens.

Pass@$k$ experiments are done with a temperature of $T=1$ on all datasets, using the Python implementation of math-verify to check the correctness of model responses \citep{math_verify}. Pass@$k$ calculations are done by sampling 256 answers to each question for each model, ensuring that results at the maximum value of $k=128$ are reliable \citep{passatk}.

\vspace{-5pt}
\subsection{Per-Dataset Breakdown}
\vspace{-5pt}
\label{app:more-details-rl-vs-distill}

\Cref{tab:distill-rl-full} shows a per-dataset breakdown of the results comparing RL and distillation in \Cref{fig:distill_v_rl}. \Cref{fig:pass_at_k_all_datasets} shows that the average pass@$k$ curves in \Cref{fig:pass_at_k} hold on the individual datasets as well.

\begin{table}[!h]
\caption{\textbf{Per-dataset breakdown for results in \Cref{fig:distill_v_rl}.} RL outperforms distillation, with distillation followed by RL outperforming only distillation and only RL.}
\vspace{-10pt}
\label{tab:distill-rl-full}
\begin{center}
\setlength{\tabcolsep}{6pt}
\resizebox{\linewidth}{!}{%
\begin{tabular}[t]{lcccc}
& \multicolumn{4}{c}{\bf Dataset} \\
\cmidrule(l){2-5}
{\bf Model} & {\bf GSM8K} & {\bf Minerva} & {\bf Math500} & {\bf Average} \\
\midrule
\multicolumn{5}{c}{\em Qwen2.5-0.5B} \\
\midrule
Base            & 35.6\% & 18.2\% & 19.2\% & 24.3\% \\
Base-RL         & 48.5\% & 36.9\% & 35.4\% & 40.3\% \\
Distill         & 45.1\% & 30.7\% & 31.6\% & 35.8\% \\
Distill-RL      & 49.1\% & 40.2\% & 35.4\% & 41.6\% \\
\midrule
\multicolumn{5}{c}{\em Qwen2.5-1.5B} \\
\midrule
Base            & 49.9\% & 27.5\% & 26.2\% & 34.5\% \\
Base-RL         & 74.5\% & 60.2\% & 59.0\% & 64.6\% \\
Distill         & 72.3\% & 56.8\% & 54.2\% & 61.1\% \\
Distill-RL      & 75.4\% & 60.5\% & 59.6\% & 65.1\% \\
\midrule
\multicolumn{5}{c}{\em Qwen2.5-3B} \\
\midrule
Base            & 75.1\% & 60.0\% & 58.4\% & 64.5\% \\
Base-RL         & 84.5\% & 69.9\% & 63.4\% & 72.6\% \\
Distill         & 82.5\% & 66.1\% & 66.2\% & 71.6\% \\
Distill-RL      & 83.0\% & 70.0\% & 66.4\% & 73.1\% \\
\midrule
\multicolumn{5}{c}{\em Qwen2.5-7B} \\
\midrule
Base            & 82.3\% & 63.6\% & 64.2\% & 70.0\% \\
Base-RL         & 88.9\% & 78.8\% & 78.8\% & 82.2\% \\
Distill         & 89.0\% & 75.9\% & 75.4\% & 80.1\% \\
Distill-RL      & -- & -- & -- & -- \\
\end{tabular}%
\hspace{1em}%
\begin{tabular}[t]{lcccc}
& \multicolumn{4}{c}{\bf Dataset} \\
\cmidrule(l){2-5}
{\bf Model} & {\bf GSM8K} & {\bf Minerva} & {\bf Math500} & {\bf Average} \\
\midrule
\multicolumn{5}{c}{\em Qwen2.5-Math-7B} \\
\midrule
Base            & 64.7\% & 65.8\% & 65.2\% & 65.2\% \\
Base-RL         & 82.7\% & 77.6\% & 77.6\% & 79.3\% \\
Distill         & 89.0\% & 81.6\% & 83.6\% & 84.7\% \\
Distill-RL      & -- & -- & -- & -- \\
\midrule
\multicolumn{5}{c}{\em Llama-3.2-3B} \\
\midrule
Base            &  0.8\% &  3.1\% &  2.6\% &  2.2\% \\
Base-RL         & 40.7\% & 17.3\% & 14.8\% & 24.3\% \\
Distill         & 33.4\% & 13.7\% & 12.4\% & 19.8\% \\
Distill-RL      & 48.7\% & 23.6\% & 19.4\% & 30.5\% \\
\midrule
\multicolumn{5}{c}{\em Llama-3.1-8B} \\
\midrule
Base            & 21.2\% & 13.8\% & 13.4\% & 16.1\% \\
Base-RL         & 76.2\% & 58.8\% & 59.0\% & 64.7\% \\
Distill         & 59.4\% & 26.2\% & 24.2\% & 36.6\% \\
Distill-RL      & -- & -- & -- & -- \\
\midrule
\multicolumn{5}{c}{\em Gemma2-2B} \\
\midrule
Base            & 10.2\% & 11.0\% & 10.4\% & 10.6\% \\
Base-RL         & 18.1\% & 12.8\% & 13.0\% & 14.7\% \\
Distill         & 15.9\% & 12.9\% & 11.6\% & 13.5\% \\
Distill-RL      & 24.5\% & 20.0\% & 20.0\% & 21.5\% \\
\end{tabular}}
\vspace{-5pt}
\end{center}
\end{table}

\begin{figure}[!h]
    \centering
    \includegraphics[width=1\linewidth]{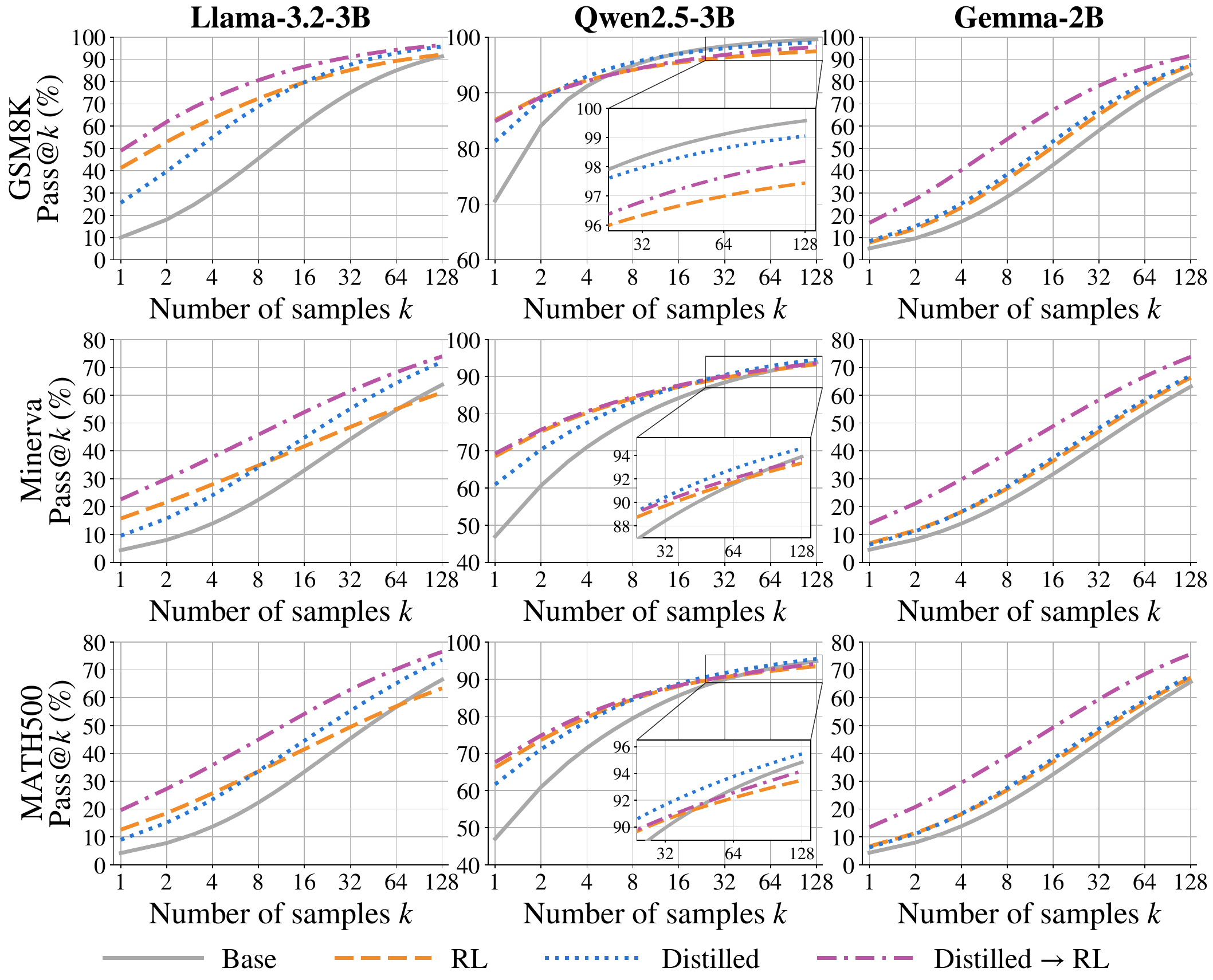}
    \vspace{-15pt}
    \caption{Per-dataset pass@$k$ curves are qualitaitvely similar to their average, shown in  \Cref{fig:pass_at_k}. Results are shown for Llama-3.2-3B (left), Qwen2.5-3B (middle), and Gemma-2B (right), across the GSM8K (top), Minerva (middle), and MATH500 (bottom) datasets.}
    \label{fig:pass_at_k_all_datasets}
    \vspace{-15pt}

\end{figure}

\vspace{-5pt}
\subsection{Explaining the Qwen2.5-Math-7B Outlier}
\vspace{-5pt}
\label{app:explain-outlier}

\begin{figure}[!t]
  \centering
  \begin{minipage}[c]{0.48\columnwidth}
    \begin{tracebox}{traceyellow}
      \scriptsize\ttfamily\raggedright
      To find the positive square root of the product \bs(10 \bs times 15 \bs times 24\bs), we can follow these steps: ...

      \op python\\
      import math\\
      \# Calculate the product\\
      product = 10 * 15 * 24\\
      \# Calculate the square root of the product\\
      square\_root = math.sqrt(product)\\
      print(square\_root)\\
      \op

      \op output\\
      60.0\\
      \op

      The positive square root of the product \bs(10 \bs times 15 \bs times 24\bs) is \bs(\bs boxed\{60\}\bs).
    \end{tracebox}
  \end{minipage}
  \begin{minipage}[c]{0.48\columnwidth}
    \begin{tracebox}{tracered}
      \scriptsize\ttfamily\raggedright
      To determine the number of ways to arrange the letters of the word ``ELLIPSE,'' we need to ...

      \op python\\
      import math\\
      ...\\
      \# Calculate the number of distinct permutations\\
      num\_permutations = math.factorial(n) // math.factorial(freq\_E)\\
      print(num\_permutations)\\
      \op

      \op output\\
      2520\\
      \op

      The number of ways to arrange the letters of the word ``ELLIPSE'' is \bs(\bs boxed\{2520\}\bs).
    \end{tracebox}
  \end{minipage}\hfill

  \caption{The Qwen2.5-Math-7B primarily reasons in Python code and hallucinates its corresponding outputs. This is illustrated with sample reasoning traces of both a correct (left) and an incorrect (right) answer. Traces are left in their raw form for illustration, with some code and reasoning being omitted.}
  \label{fig:example_python}
\end{figure}

\begin{figure}[!th]
    \centering
    \vspace{-5pt}
    \includegraphics[width=\linewidth]{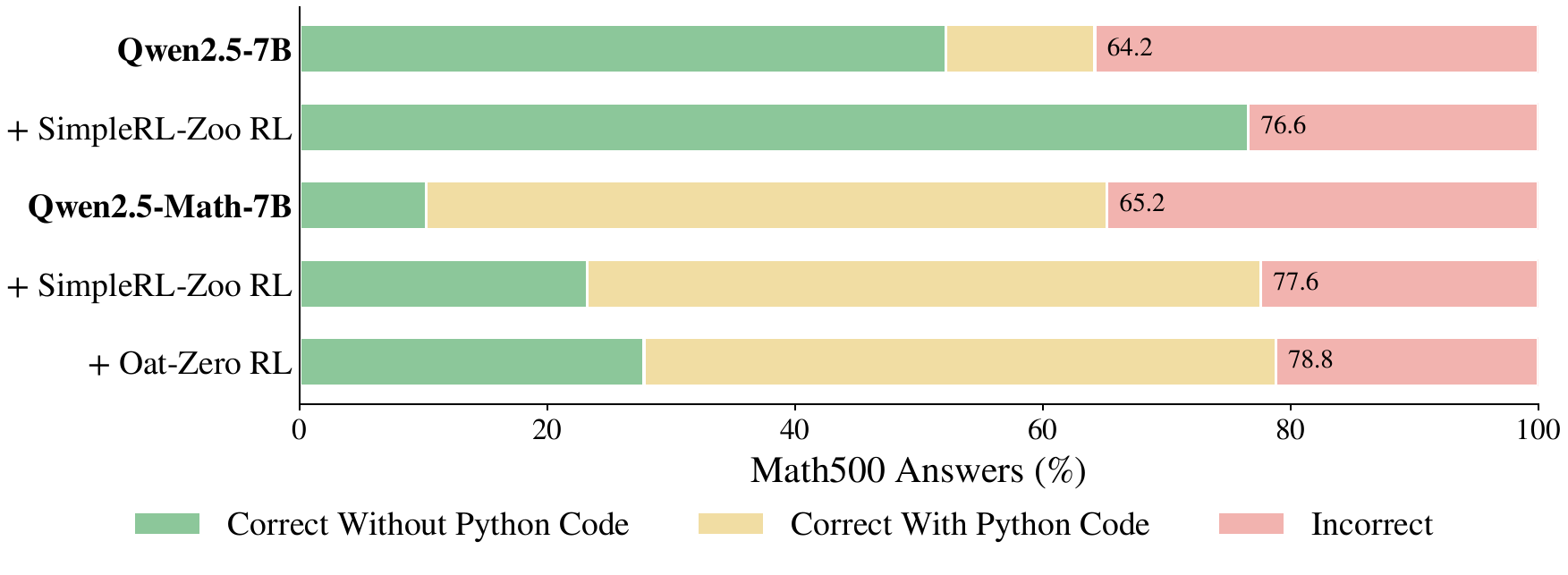}
    \vspace{-10pt}
    \caption{\textbf{RL training fails to eliminate the undesirable pattern of reasoning with Python code when it is the dominant approach used by a base model.} The Qwen2.5-7B base model (top) rarely answers questions correctly using code, so RL training eliminates its code-reasoning behavior. In contrast, Qwen2.5-Math-7B (middle) obtains correct answers predominantly with code, and RL training through two frameworks \citep{oatzero, simplerlzoo} reinforces this code-reasoning behavior.}
    \label{fig:python-dominates}
\vspace{-15pt}
\end{figure}

Distillation outperforms RL only on the Qwen2.5-Math-7B base model \citep{qwen-2.5}. This is the only base model that often exhibits an undesirable reasoning pattern for producing correct answers: it writes Python code, and then hallucinates the code's output, as illustrated in \Cref{fig:example_python}. RL training reinforces this suboptimal behavior, whereas distillation eliminates it, since the teacher's traces contain no examples of using code to answer questions; thus, distillation outperforms RL in this instance. \Cref{fig:python-dominates} demonstrates that the Qwen2.5-7B base model training does not suffer from the same limitation, where most of its reasoning does not include Python.

\subsection{Theory}
\label{app:theory}
RL outperforming distillation is surprising for several reasons. Sequence-level distillation \citep{seq-kd} is a form of supervised learning, which is typically considered more efficient than RL. Concretely, from an information theory perspective, RL from binary rewards has been argued to provide only $O(1)$ bits per episode, whereas supervised learning gives tens to thousands of bits per sample \citep{lecun2016predictive, lora}. Moreover, some works have empirically shown cases where distillation outperforms RL \citep{deepseek, leap}. Thus, why does RL outperform distillation?

A different perspective that explains these results comes from probabilistic inference; we first give high-level intuition and then some more precise results. First, note that distillation is also a form of RL, corresponding broadly to off-policy reinforcement learning, more specifically to imitation learning, which, in the case of sequence-level knowledge distillation, is simply behavior cloning \citep{rashidinejad2021bridging,foster2024behavior}. The loss used for knowledge distillation uses a forward KL and is therefore mode-covering \citep{mini-llm-kl-divergence}, meaning it puts a high probability over the teacher's modes, potentially interpolating between them (see dotted blue line in Figure \ref{fig:mode_behavs}). This is a general property of imitation learning losses, specifically for a broad class of losses stemming from $f$-divergences \citep{ke2020imitation}. Thus, even given a perfect teacher, some of the student's probability mass could fall outside the teacher's support and therefore on incorrect answers. In contrast, typical max-entropy on-policy RL is mode-seeking relative to a perfect teacher \citep{levine2018reinforcement}, which generally causes an RL-trained student to collapse its probability mass to a subset of correct answers -- see the dashed orange line in Figure \ref{fig:mode_behavs}.

This intuition can be rigorously formalized. Assuming a temperature of $T=1$, regular knowledge distillation minimizes the cross-entropy between a student and a teacher's distribution, which is equivalent to minimizing the mode-covering KL-divergence $D_{KL}(p_t|p_s)$, with $p_s,p_t$ denoting the student and teacher's output distributions, respectively. However, sequence-level knowledge distillation is not clearly mode-covering, as it first generates a set of completions using the teacher and then trains the student on those completions. This is shown to be a noisy approximation of the regular mode-covering knowledge distillation loss.

\begin{theorem*}
    (Seq-KD is noisy regular KD) Denote the teacher and student model distributions by $p_t,p_s$ respectively. The regular knowledge distillation (KD) loss is $L_{KD}\coloneqq D_{KL}(p_t|p_s)=\mathbb{E}_{p_t}[-\log(p_s)]-H(p_t)$, where $H(p_t)$ denotes the teacher distribution's entropy and we assume a temperature of $T=1$. The sequence-level knowledge distillation (Seq-KD) loss is $\hat{L}_{\text{Seq}-KD}\coloneqq \frac{1}{|D|}\sum_{x\in D}-\log(p_s(x))$, where $D\sim p_t$ is a set of completions sampled i.i.d. from the teacher. The Seq-KD loss' gradient is an unbiased estimate of the KD gradient, such that
    $$\nabla L_{KD}=\mathbb{E}_{D\sim p_t}[\nabla \hat{L}_{\text{Seq}-KD}].$$
\end{theorem*}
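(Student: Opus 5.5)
The plan is a direct computation in three steps. Differentiate both losses with respect to the student parameters $\theta$. Observe that the teacher entropy term does not depend on $\theta$. Then use linearity of expectation over the i.i.d. sample $D$.

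First, write $L_{KD}(\theta)=\mathbb{E}_{x\sim p_t}[-\log p_{s,\theta}(x)]-H(p_t)$. Since $p_t$ is fixed and independent of $\theta$, $\nabla_\theta H(p_t)=0$. This gives
\[
\nabla L_{KD}=\nabla_\theta\,\mathbb{E}_{x\sim p_t}[-\log p_{s,\theta}(x)] .
\]
Next, interchange gradient and expectation:
\[
\nabla L_{KD}=\mathbb{E}_{x\sim p_t}[-\nabla_\theta\log p_{s,\theta}(x)] .
\]
Completions are finite token sequences of bounded length, here at most the maximum context length. The output space is therefore finite, the expectation is a finite sum, and the interchange is immediate. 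In an infinite-support setting I would instead assume a dominating integrable bound on $\nabla_\theta\log p_s$ and invoke dominated convergence. The one genuine hypothesis is that $p_s(x)>0$ wherever $p_t(x)>0$, which softmax outputs guarantee. This keeps the loss finite and differentiable.

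Second, compute the Seq-KD gradient for a fixed sample $D=\{x_1,\dots,x_n\}$:
\[
\nabla\hat{L}_{\text{Seq}-KD}=\frac{1}{n}\sum_{i=1}^n -\nabla_\theta\log p_{s,\theta}(x_i).
\]
Take the expectation over $D\sim p_t$, with the $x_i$ i.i.d. from $p_t$ and $|D|=n$ fixed. By linearity, each summand contributes $\mathbb{E}_{x\sim p_t}[-\nabla\log p_s(x)]$, and averaging $n$ identical terms returns the same quantity. Comparing with the first step yields $\nabla L_{KD}=\mathbb{E}_{D\sim p_t}[\nabla\hat{L}_{\text{Seq}-KD}]$.

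I expect no real obstacle. The only steps needing care are justifying the interchange of $\nabla$ and $\mathbb{E}$, which is trivial in the finite-sequence setting, and noting that the entropy term drops out only because $p_t$ is frozen during distillation.

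As a remark, not needed for the proof, I would add that the estimator's variance scales as $1/|D|$. This makes precise the sense in which Seq-KD is a noisy approximation of KD: it follows the mode-covering forward-KL objective only in expectation.
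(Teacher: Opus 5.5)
Your proof is correct and follows essentially the paper's route: the teacher-entropy term has zero gradient because $p_t$ is frozen, and linearity of expectation over the i.i.d.\ sample reduces the expectation over $D\sim p_t$ to $\mathbb{E}_{x\sim p_t}[-\log p_s(x)]$. The only difference is that you differentiate before averaging and explicitly justify swapping $\nabla$ and $\mathbb{E}$ (trivial on the finite space of bounded-length sequences), a step the paper leaves implicit when it ends at $\nabla\mathbb{E}_{D\sim p_t}[\hat{L}_{\text{Seq}-KD}]$ rather than $\mathbb{E}_{D\sim p_t}[\nabla\hat{L}_{\text{Seq}-KD}]$.
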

\begin{proof}
    Note that:
    $$\mathbb{E}_{D\sim p_t}[\hat{L}_{\text{Seq}-KD}]= \frac{1}{|D|}\sum_{i=1}^{|D|}\mathbb{E}_{x\sim p_t}[-\log(p_s(x))]=\frac{1}{|D|}|D|\mathbb{E}_{x\sim p_t}[-\log(p_s(x))]=\mathbb{E}_{p_t}[-\log(p_s)].$$
    Thus, since $\nabla H(p_t)=0$ as the teacher is constant with respect to the student distribution's parameterization, we have that:
    $\nabla L_{KD}=\nabla \mathbb{E}_{p_t}[-\log(p_s)] = \nabla \mathbb{E}_{D\sim p_t}[\hat{L}_{\text{Seq}-KD}],$
    thereby proving the theorem.
\end{proof}

Thus, the sequence-KD loss, which is typically used when distilling different language models, has the same mode-covering behavior as regular knowledge distillation.

Note that \citet{seq-kd} propose using sequence-KD with beam search and potentially other sampling modifications, which bias the effective teacher's distribution. Here we assume no such modified sampling is used, as in practice, beam search is rarely used with modern LLMs.

In contrast, RL is mode-seeking relative to a specific distribution. Specifically, assume a ``perfect'' teacher that places uniform probability mass over all correct answers. Policy-gradient methods like PPO \citep{schulman2017proximal} and GRPO \citep{grpo} modify the REINFORCE gradient to allow taking off-policy steps, with the regular on-policy REINFORCE gradient being $\mathbb{E}_p[r\nabla\log(p)]$, where $p$ is the policy's distribution and $r$ is the reward. Often an entropy term is added to induce exploration, yielding the loss $\mathbb{E}_p[r\nabla\log(p)]+\beta H(p)$, where $\beta$ is a hyperparameter. When training models to solve math or coding problems by reasoning about them, $r$ is typically a binary reward of $1$ if the task is solved correctly and $0$ otherwise. This reward is thus uniform over all correct answers. This intuition yields the following theorem, which is essentially a different version of a theorem from \citet{levine2018reinforcement}, with some modifications and simplifications for our setting.

\begin{theorem*}
    (Max-entropy RL is mode-seeking to a perfect teacher) Let $p_s$ denote a student model's distribution and $p_t$ be the distribution of a soft perfect teacher, where $p_t(x)\propto \exp(\alpha r(x))$ for some $\alpha$, where $r(x)=1$ (for correct answers) and $0$ otherwise. Note that when $\alpha\to\infty$, then $p_t$ is uniform over correct answers and is a hard perfect teacher. Denote the max-entropy RL policy gradient as $\nabla L_\text{max-ent}=\mathbb{E}_{p_s}[r\nabla\log(p_s)]+\beta \nabla H(p_s)$. Also, denote the mode-seeking (reverse-KL) distillation loss as $L_{\text{rev}-KL}=D_{KL}(p_s|p_t)$. For appropriately chosen $\alpha$ or $\beta$ we have that
    $\nabla L_\text{max-ent}\propto \nabla L_{\text{rev}-KL}.$
\end{theorem*}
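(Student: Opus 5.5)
The plan is to compute $\nabla L_{\text{rev}-KL}$ directly and rearrange it into a reward-weighted score term plus an entropy term. That form can then be matched against $\nabla L_\text{max-ent}$. Write $p_t(x)=\exp(\alpha r(x))/Z$, where $Z=\sum_x \exp(\alpha r(x))$ does not depend on the student parameters. Then
$$L_{\text{rev}-KL}=\mathbb{E}_{p_s}[\log p_s-\log p_t]=-H(p_s)-\alpha\,\mathbb{E}_{p_s}[r]+\log Z .$$
Differentiating with respect to the student's parameters removes $\log Z$. This leaves $\nabla L_{\text{rev}-KL}=-\nabla H(p_s)-\alpha\nabla\mathbb{E}_{p_s}[r]$.

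The next step is the score-function (REINFORCE) identity. Since $r$ does not depend on the parameters, $\nabla\mathbb{E}_{p_s}[r]=\sum_x r(x)\nabla p_s(x)=\mathbb{E}_{p_s}[r\nabla\log p_s]$. Substituting gives
$$\nabla L_{\text{rev}-KL}=-\alpha\left(\mathbb{E}_{p_s}[r\nabla\log p_s]+\tfrac{1}{\alpha}\nabla H(p_s)\right).$$
Choosing $\beta=1/\alpha$ (equivalently $\alpha=1/\beta$) makes the bracket exactly $\nabla L_\text{max-ent}$. Hence $\nabla L_{\text{rev}-KL}=-\alpha\,\nabla L_\text{max-ent}$, so the two are proportional.

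I would then remark on the sign. $\nabla L_\text{max-ent}$ is the ascent direction of the objective $\mathbb{E}[r]+\beta H$, while $L_{\text{rev}-KL}$ is minimized, so proportionality with constant $-\alpha$ means a gradient-ascent step on the RL objective is exactly a gradient-descent step on the reverse KL. The negative constant is therefore the correct reading of "$\propto$" here, and the mode-seeking behaviour of reverse KL carries over to max-entropy RL. For the limit $\alpha\to\infty$ (the hard perfect teacher), the matching forces $\beta\to 0$, recovering plain REINFORCE as the $\beta\to 0$ limit of the correspondence.

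No step is deep. The only points needing care are, first, the sign and direction conventions (ascent on the RL objective versus descent on the KL loss). Second, one must justify exchanging the gradient and the expectation (or sum) in the score-function identity. I would handle that by assuming a finite, or suitably regular, output space. Third, one must check that $\nabla H(p_s)$ is the same quantity on both sides, with no hidden baseline term. This holds because $\mathbb{E}_{p_s}[\nabla\log p_s]=0$, so any constant added to $r$ or to $\log p_t$ drops out.
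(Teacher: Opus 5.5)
Your proposal is correct and follows the paper's proof: write $p_t=\exp(\alpha r)/Z_\alpha$, expand $D_{KL}(p_s|p_t)$ into an expected-reward term, an entropy term and the parameter-independent normalizer, drop the normalizer under the gradient, set $\alpha=1/\beta$, and explain the sign as descent on the KL versus ascent on the RL objective. You are more explicit than the paper in two places: you spell out the score-function identity $\nabla\mathbb{E}_{p_s}[r]=\mathbb{E}_{p_s}[r\nabla\log p_s]$, which the paper uses without stating it, and you get the sign of the $\log Z$ term right (the paper writes $-\log Z_\alpha$, which is harmless because the term vanishes under the gradient).
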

\begin{proof}
    Note that:
    $$L_{\text{rev}-KL}=D_{KL}(p_s|p_t)=\mathbb{E}_{p_s}[-\log(p_t)]-H(p_s).$$
    As the perfect teacher's distribution is $p_t(x)=\exp(\alpha r(x))/Z_{\alpha},$ we have that:
    $$L_{\text{rev}-KL}=\mathbb{E}_{p_s}[-\alpha r]-\log(Z_\alpha)-H(p_s).$$
    Setting $\alpha=\frac{1}{\beta}$ and taking gradients completes the proof, as $\nabla Z_\alpha=0$. Note that the sign difference is due to $L_{\text{rev}-KL}$ being a proper loss that is minimized, whereas $L_\text{max-ent}$, being a surrogate loss for a policy gradient, is maximized.
\end{proof}
Thus, as the RL loss is equivalent to a reverse-KL, it induces mode-seeking behavior. Interestingly, the $\alpha\to\infty$ perfect teacher limit, where the teacher is uniform over correct answers, corresponds to $\beta\to0$, where the max-entropy RL becomes regular reward maximization.

Mode-seeking losses, as used in on-policy RL, enable mode-collapsed policies to achieve low loss, where the probability mass is over some but not necessarily all modes (dashed orange line in Figure \ref{fig:mode_behavs}). However, mode-covering losses, as used in distillation, require a wider support, which can result in significant probability mass between modes being given to incorrect answers (dotted blue line in Figure \ref{fig:mode_behavs}), lowering pass@1 performance of models trained using distillation when compared to those trained using RL. \citet{mini-llm-kl-divergence} discusses similar intuition as well, and show that a mode-seeking on-policy distillation method outperforms mode-covering off-policy distillation.

These losses different behaviors offers a speculative explanation for why distillation consistently increases a model's pass@$k$ for high k, while RL often does not \citep{leap}. Given a subset of problems for which the base model has very little or no probability mass over correct answers, RL training may not increase the probability of generating correct answers, since its mode-seeking loss does not penalize giving no mass to low-probability solutions, as long as some other solutions are decently probable. In contrast, distillation would more readily increase the low-probability mass over correct answers to these problems, since its mode-covering loss incentivizes the student to cover all the modes of its teacher. Thus, when sampled from many times, and when compared to RL-trained models, distilled models are more often able to solve problems they previously could not.

\subsection{Distillation Ablations}
\label{app:ablations}
We perform several ablations to see whether better distillation setups can make distillation outperform RL. All ablations yield at most marginal performance gains and do not affect qualitative conclusions. Results in \Cref{sec:rl_in_threat_model} use the simplest experimental setup, with no ablations applied.

\textbf{Rejection sampling}. We investigated whether curating higher-quality teacher reasoning traces for SFT with rejection sampling lets distillation outperform RL in improving a model's reasoning, using the Qwen2.5 base models with 0.5B, 1.5B, and 7B parameters. Teachers were given either 8, 32, or 64 attempts to answer each question in the distillation dataset, with only one correct answer per question being kept. Questions with no correct generated answers were discarded. The best result was a marginal performance improvement (roughly $+1\%$ average accuracy) with 8 samples, but it was not sufficient to surpass models trained with RL, as shown in \Cref{tab:rejection-sampling}. Rejection sampling with 32 or 64 attempts per question led to marginally worse results.

\begin{table}[!h]
\caption{\textbf{Distillation with rejection sampling still underperforms reinforcement learning.} Rejection sampling is done by allowing the teacher 8 attempts at every question, and distillation is then done only over correct-answer traces.}
\vspace{-15pt}
\label{tab:rejection-sampling}
\begin{center}
\setlength{\tabcolsep}{6pt}
\begin{tabular}{lcccc}
& \multicolumn{4}{c}{\bf Dataset} \\
\cmidrule(l){2-5}
{\bf Model} & {\bf GSM8K} & {\bf Minerva} & {\bf Math500} & {\bf Average} \\
\midrule
\multicolumn{5}{c}{\em Qwen2.5-0.5B} \\
\midrule
Base            & 35.6\% & 18.2\% & 19.2\% & 24.3\% \\
Base-RL         & 48.5\% & 36.9\% & 35.4\% & \textbf{40.3}\% \\
Distill         & 47.0\% & 31.5\% & 31.4\% & 36.6\% \\
\midrule
\multicolumn{5}{c}{\em Qwen2.5-1.5B} \\
\midrule
Base            & 49.9\% & 27.5\% & 26.2\% & 34.5\% \\
Base-RL         & 70.9\% & 56.1\% & 59.0\% & \textbf{62.0}\% \\
Distill         & 61.9\% & 33.3\% & 33.4\% & 42.9\% \\
\midrule
\multicolumn{5}{c}{\em Qwen2.5-7B} \\
\midrule
Base            & 82.3\% & 63.6\% & 64.2\% & 70.0\% \\
Base-RL         & 88.9\% & 78.8\% & 78.8\% & \textbf{82.2}\% \\
Distill         & 88.6\% & 75.8\% & 77.6\% & 80.7\% \\
\end{tabular}
\vspace{-10pt}
\end{center}
\vspace{-10pt}
\end{table}

\textbf{Students and Teachers From Different Model Families}. We compare RL and distillation when the teacher's architecture differs from the students' in \Cref{fig:distill_v_rl}, where experiments with Llama and Gemma base models use a Qwen teacher for distillation. RL outperforms distillation here as well.

\textbf{Temperature}. Distillation on teacher traces generated with $T=1$ marginally outperformed distillation on traces generated with $T=0$, so we use $T=1$ to generate all distillation traces. This is supported by theory; sampling completions with temperature 1 is equivalent in the large-data limit to regular knowledge distillation \citep{distillation} (see Appendix \ref{app:theory}).

Additional ablations were performed with both the Qwen2.5-7B base model (see results in \Cref{tab:distill-ablations}) and the Qwen2.5-0.5B model for faster iteration (see results in \Cref{tab:distill-ablations-05b}). We discuss conclusions below.

\textbf{Varying Teacher Size}. Using smaller teachers in distillation (where the RL-trained variant of the base model is used as the teacher, and therefore has the same size as the student) marginally improved the student's performance. This aligns with the literature on how a student may struggle to learn from a much more capable teacher \citep{capacity_gap, dist_scale_laws}. However, these performance improvements never led students to exceed the performance of their RL-trained counterparts. For consistency throughout, we primarily choose a larger, more powerful open-source model as the teacher (Qwen2.5-14B-RL), which is also more representative of distillation attacks, where the closed-source model is likely much more capable than the student.

\textbf{Varying Distillation Datasets}. We test how using different question datasets for distillation and RL affects the student's performance. This led to mixed results. Model performance improved slightly (up to $+2\%$) when we used marginally harder questions for distillation on the Qwen2.5-0.5B base model, using the hard rather than medium question dataset from SimpleRL-Zoo \citep{simplerlzoo}. However, using the questions in AIME \citep{aime_1983_2024}, GSM8K \citep{gsm8k}, or Simple Test Time Scaling \citep{s1} datasets to generate teacher traces decreased performance significantly. No dataset made the distilled student outperform RL. In the main paper, for simplicity and fairness, we report results using the same dataset for both distillation and RL.

\textbf{Sampling a Few Times Per Question or Training Over Multiple Epochs}. Using the Qwen2.5-0.5B base model, we attempted distillation over datasets created by sampling from the teacher 3 or 8 times per question, as well as training for 2, 3, or 5 epochs. Increasing either marginally harmed student performance, so we use 1 sample from each question for distillation and train for 1 epoch.

\begin{table}[!h]
\caption{\textbf{Further distillation ablations on Qwen2.5-7B do not qualitatively change results.} Changing the default distillation setup (see \Cref{app:distill-rl-details}) to use a smaller teacher (Qwen2.5-7B-RL \citep{simplerlzoo}) marginally improves performance. Changing the setup either to use a different teacher (Deepseek-R1-Distill-Qwen-14B \citep{deepseek} or Qwen3-14B \citep{qwen}), or a different training dataset (s1K \citep{s1}) lowers distillation performance.}
\label{tab:distill-ablations}
\vspace{-10pt}
\begin{center}
\setlength{\tabcolsep}{2pt}
\begin{tabular}{lcccc}
& \multicolumn{4}{c}{\bf Dataset} \\
\cmidrule(l){2-5}
{\bf Change} & {\bf GSM8K} & {\bf Minerva} & {\bf Math500} & {\bf Average} \\
\midrule
None                                  & 89.0\% & 75.9\% & 75.4\% & \textbf{80.1}\% \\
Qwen2.5-7B-RL Teacher             & 88.9\% & 75.9\% & 76.4\% & 80.4\% \\
DeepSeek-R1-Distill-Qwen-14B Teacher & 87.4\% & 71.5\% & 69.4\% & 76.1\% \\
Qwen3-14B Teacher                    & 83.5\% & 41.5\% & 38.2\% & 54.4\% \\
s1K Dataset                         & 88.7\% & 74.2\% & 75.8\% & 79.6\% \\
\end{tabular}
\vspace{-10pt}
\end{center}
\end{table}

\begin{table}[!h]
\caption{\textbf{Further distillation ablations on Qwen2.5-0.5B do not qualitatively change results}. Performance improves only when using the harder SimpleRLZoo dataset \citep{simplerlzoo} to generate the teacher's traces, with no gains from using the s1K \citep{s1} dataset, or 1000 questions from AIME \citep{aime_1983_2024}. All other ablations also use the hard SimpleRLZoo dataset, but do not yield significant performance improvements.}
\vspace{-5pt}
\label{tab:distill-ablations-05b}
\begin{center}
\setlength{\tabcolsep}{2pt}
\begin{tabular}[t]{lc}
{\bf Description} & {\bf GSM8K} \\
\midrule
\multicolumn{2}{c}{\em Reference} \\
\midrule
Base                     & 35.6\% \\
RL                       & \textbf{48.5}\% \\
Distilled                & 45.1\% \\
\midrule
\multicolumn{2}{c}{\em Training Dataset} \\
\midrule
SimpleRL-Zoo (Hard)      & 47.4\% \\
GSM8K                    & 44.4\% \\
AIME   & 43.4\% \\
S1K                      & 35.0\% \\
\end{tabular}%
\hspace{2em}%
\begin{tabular}[t]{lc}
{\bf Description} & {\bf GSM8K} \\
\midrule
\multicolumn{2}{c}{\em Teacher} \\
\midrule
Qwen2.5-0.5B-RL          & 47.5\% \\
Qwen2.5-Math-1.5B        & 38.6\% \\
\midrule
\multicolumn{2}{c}{\em Sampling / Epochs} \\
\midrule
2 Epochs                 & 45.5\% \\
3 Epochs                 & 45.9\% \\
5 Epochs                 & 46.3\% \\
3 Samples                & 46.4\% \\
8 Samples                & 46.0\% \\
3 Samples, 5 Epochs      & 44.6\% \\
\end{tabular}
\vspace{-10pt}
\end{center}
\end{table}

\textbf{Results hold for larger models}. In addition to the results for models with 0.5B-7B parameters, experiments on larger-scale models indicate that RL outperforms distillation in those scales as well. As shown in \Cref{tab:larger-model}, when RL and distillation are compared on the Qwen2.5-32B base model, a state-of-the-art RL-trained model (DAPO-Qwen2.5-32B, \cite{DAPO}) outperforms the corresponding one trained with distillation. At this larger scale, the SimpleRL-Zoo framework does not produce state-of-the-art results, possibly because the question dataset used for RL on the 32B model is too simple, since it is the same set of questions used for the 7B models \citep{simplerlzoo}.

\begin{table}[!h]
\caption{\textbf{Even at larger scales, RL outperforms distillation at improving a base model's reasoning accuracy.} Qwen2.5-32B base models trained with RL and distillation using a harder dataset. The 32B SimpleRLZoo model underperforms, likely due to using too easy questions in RL training. Open-source versions of each model were evaluated using the same evaluation framework as models at the 7B scale (see Appendix \ref{app:distill-rl-details}.}
\vspace{-10pt}
\label{tab:larger-model}
\begin{center}
\setlength{\tabcolsep}{2pt}
\begin{tabular}{lcccc}
& \multicolumn{4}{c}{\bf Dataset} \\
\cmidrule(l){2-5}
{\bf Model} & {\bf GSM8K} & {\bf Minerva} & {\bf Math500} & {\bf Average} \\
\midrule
DAPO-Qwen2.5-32B (RL)          & \textbf{94.4}\% & \textbf{94.7}\% & \textbf{91.6}\% & \textbf{93.6}\% \\
Qwen2.5-32B-Distill-DeepseekR1 & 93.1\% & 91.0\% & 90.2\% & 91.4\% \\
Qwen2.5-32B-SimpleRLZoo        & 85.5\% & 85.4\% & 84.4\% & 85.1\% \\
\end{tabular}
\vspace{-10pt}
\end{center}
\end{table}

\section{Antidistillation Sampling Details}
\label{app:rl-free}

\subsection{Implementation Details}
\label{app:antidist-details}

Antidistillation sampling experiments perform distillation and RL on Qwen2.5-0.5B with the same setup used in the RL vs. distillation comparison, as described in Appendix \ref{app:distill-rl-details}.  We used Qwen2.5-1.5B-RL as the teacher and generated its traces for distillation on the medium SimpleRLZoo dataset, with and without the antidistillation sampling \citep{antidist_sampling}. Antidistillation sampling is applied by poisoning teacher traces before distillation, using the code provided by the authors, and using the default max length of 1024 tokens. \Cref{fig:examples-ads} in Appendix \ref{app:illustrate-traces} shows qualitative examples of poisoned traces at different poisoning levels.

Antidistillation sampling poisons traces using a single, universal proxy student (configured by default as Qwen2.5-3B) to estimate which tokens to sample from a teacher to increase an attacker's loss on a downstream task. We use the same proxy student to generate poisoned teacher traces, also at a temperature of 1 (see Appendix \ref{app:ablations}). Antidistillation sampling provides a hyperparameter $\lambda$ to control the degree to which the teacher's traces are poisoned. We obtained low, medium, and high poisoning levels by setting $\lambda$ to $0.0316$, $0.05$, and $0.18$, and calculated the teacher's relative performance degradation using the values provided by the antidistillation sampling framework.

\subsection{Detailed Results}
\Cref{tab:ads-qwen05b} provides a per-dataset breakdown of the results given in \Cref{fig:antidist_qwen05}.

\begin{table}[!h]
\caption{Per-dataset for results in \Cref{fig:antidist_qwen05}. After RL, Qwen2.5-0.5B base models distilled with low or mild poisoning match the unpoisoned distilled model's performance. High poisoning lowers post-RL performance.}
\vspace{-10pt}
\label{tab:ads-qwen05b}
\begin{center}
\setlength{\tabcolsep}{2pt}
\begin{tabular}{lcccc}
& \multicolumn{4}{c}{\bf Dataset} \\
\cmidrule(l){2-5}
{\bf Model} & {\bf GSM8K} & {\bf Minerva} & {\bf Math500} & {\bf Average} \\
\midrule
\multicolumn{5}{c}{\em Before RL} \\
\midrule
Base            & 35.6\% & 18.2\% & 19.2\% & 24.3\% \\
No Poison       & 44.7\% & 33.4\% & 36.0\% & 38.0\% \\
Low Poisoning   & 43.5\% & 31.0\% & 30.2\% & 34.9\% \\
Mild Poisoning  & 40.9\% & 25.9\% & 25.4\% & 30.7\% \\
High Poisoning  & 23.7\% &  8.1\% &  7.6\% & 13.1\% \\
\midrule
\multicolumn{5}{c}{\em After RL} \\
\midrule
Base            & 48.5\% & 36.9\% & 35.4\% & 40.3\% \\
No Poison       & 45.6\% & 39.4\% & 36.8\% & 40.6\% \\
Low Poisoning   & 47.5\% & 39.3\% & 35.4\% & 40.8\% \\
Mild Poisoning  & 48.8\% & 39.2\% & 35.8\% & 41.3\% \\
High Poisoning  & 45.3\% & 31.4\% & 26.2\% & 34.3\% \\
\end{tabular}
\vspace{-10pt}
\end{center}
\end{table}

\subsection{Additional Experiments}
\label{app:llama-anti}

We replicate the antidistillation sampling experiments with the Llama-3.2-3B base model, following the same setup as in \Cref{app:distill-rl-details}, with two adjustments: using Qwen2.5-7B-RL as the teacher, and using the harder SimpleRLZoo dataset for both distillation and RL. Low and mild poisoning levels (with 10\% and 30\% relative degradation in teacher performance, respectively) are achieved with $\lambda = 0.03$ and $ \lambda = 0.05$. \Cref{fig:examples-ads} in Appendix \ref{app:illustrate-traces} includes some examples of this setting's poisoned teacher traces.

\Cref{tab:ads-llama3b} shows the results for different models after distillation and after RL. First and most importantly, the teacher must degrade by at least 10\% before the student's post-distillation performance noticeably degrades, making this defense unlikely to be used in practice due to the cost to the teacher. In this case, low and mild poisoning levels lead to degraded performance after RL, but this may simply be due to effectively having a worse teacher.

\begin{table}[!h]
\caption{\textbf{Evaluation breakdown for antidistillation on Llama-3.2-3B.} $\lambda\geq0.03$ is required to create performance degradations after distillation, requiring a minimum loss of 10\% in relative teacher accuracy. However, results indicate that antidistillation sampling can in some cases be effective. }
\vspace{-10pt}
\label{tab:ads-llama3b}
\begin{center}
\setlength{\tabcolsep}{2pt}
\begin{tabular}{lcccc}
& \multicolumn{4}{c}{\bf Dataset} \\
\cmidrule(l){2-5}
{\bf Model} & {\bf GSM8K} & {\bf Minerva} & {\bf Math500} & {\bf Average} \\
\midrule
\multicolumn{5}{c}{\em Before RL} \\
\midrule
Base                        & 21.1\% &  7.3\% &  7.8\% & 12.1\% \\
$\lambda = 0$               & 30.5\% & 12.8\% & 12.2\% & 18.5\% \\
$\lambda = 0.01$            & 30.6\% & 12.5\% & 12.2\% & 18.4\% \\
$\lambda = 0.02$            & 32.2\% & 12.9\% & 11.0\% & 18.7\% \\
$\lambda = 0.03$ (Low)      & 27.0\% & 12.2\% & 12.2\% & 17.1\% \\
$\lambda = 0.05$ (Mild)     & 23.2\% &  9.5\% &  9.2\% & 14.0\% \\
\midrule
\multicolumn{5}{c}{\em After RL} \\
\midrule
Base                        & 40.7\% & 17.3\% & 14.8\% & 24.3\% \\
$\lambda = 0$               & 45.7\% & 18.9\% & 19.2\% & 27.9\% \\
$\lambda = 0.03$ (Low)      & 36.9\% & 15.6\% & 14.2\% & 22.3\% \\
$\lambda = 0.05$ (Mild)     & 32.5\% & 16.2\% & 15.6\% & 21.4\% \\
\end{tabular}
\vspace{-10pt}
\end{center}
\end{table}

\section{Summarization-Expansion Attack Details}
\label{app:attack}

This section provides additional implementation details for the attack discussed in \Cref{sec:rl_makes_simple_attacks_work}, discusses how full traces were extracted from closed-source models, and presents some conclusions drawn from additional ablations.

\subsection{Implementation Details}
\label{app:attack-details}

\textbf{Datasets and hyperparameters}. The datasets, prompts, and hyperparameters used for distillation and RL training match those used by experiments comparing the two training methods in \Cref{sec:rl_in_threat_model} (see Appendix \ref{app:distill-rl-details} for details). Because all experiments use the Llama-3.2-3B base model, we use questions from the medium SimpleRLZoo dataset \citep{simplerlzoo} to generate summaries and final answers from both open-source and closed-source teachers.

\textbf{Open-source trace generation and summarization.} The Qwen2.5-7B-Instruct model is used to summarize open-source traces, which is prompted as shown in \Cref{fig:app-summary-prompt}.

\textbf{Open-source summary expansion.} Open-source summaries are expanded with Llama-3.2-3B-Instruct \citep{llama} as an expander model, using the prompt shown in Figure \cref{fig:app-expand-prompt-open}. Rejection sampling with an average of roughly 5 attempts (but up to 64) is used to eliminate the very few cases, often less than 200 out of $\sim$8000, that contain obvious direct references to the summary in expanded traces. This step is likely unnecessary for weak expander models at larger scales.

\textbf{Closed-source summary and final answer expansion.} Summaries and final answer from closed source models are expanded in two stages. First, we synthesize summaries and final answers into a coherent breakdown of reasoning steps using the prompt shown in Figure \cref{fig:app-expand-prompt-closed-stage1}, to homogenize outputs from all model providers. The reasoning breakdown is then expanded using the same prompt and rejection sampling setup used for the open-source summaries expansion.

\textbf{Prompts in training}. All distillation and reinforcement learning training uses the ``simple'' prompt, as discussed in Appendix \ref{app:distill-rl-details}, except for distillation on full traces from closed-source models. These traces have both extracted reasoning and final answers. Distillation is done over full traces by wrapping the extracted reasoning in $<$think$>$ tags and appending the opening $<$think$>$ tag to the start of the prompt \citep{deepseek}, following best practices to attempt to get as much of a capability gain from full traces as possible.

\textbf{Evaluations}. All evaluations use the same framework as in Appendix \ref{app:distill-rl-details}, ensuring the evaluation prompt matches the prompt used in training (RL or distillation) immediately before evaluation.

\textbf{Extracting summaries and final answers from closed-source models}. We extract summaries and final answers from closed-source models via API queries using the ``complex'' prompt, matching the prompt used to generate all other teacher traces. We query GPT-5 mini and Gemini Flash 3.6 with high reasoning effort enabled, while we give Claude Sonnet 4.6 a thinking budget of 8192 tokens.

\textbf{Extracted full reasoning traces from closed-source models}. We extract traces from GPT-5 mini and Claude Sonnet 4.6 using the same reasoning effort as used to obtain summaries. For an ablation described in Appendix \ref{app:attack-ablations}, we also extract full traces with Claude Sonnet 4.6 at a max reasoning effort. The next section discusses this extraction attack's details.

\subsection{Extracting Full Reasoning Traces from Closed-Source Models}
\label{app:extraction}

We perform full trace extraction as described by \citet{panfilov2026stealing} on GPT-5 mini and Claude Sonnet 4.6, using the same API queries as used to obtain summaries and final answers. Both providers return the hidden reasoning in an opaque form alongside the visible answer: an encrypted reasoning item for GPT-5 mini, and a signed thinking block for Claude Sonnet 4.6. Attaching that item to a later request makes a decoder model from the same family read the hidden reasoning out verbatim; we use GPT-5.6-luna as the decoder model for GPT-5 mini, and Claude Haiku 4.5 for Claude Sonnet 4.6. All queries go through the Microsoft Azure endpoint, which was unpatched at the time of writing, and disclosed to Anthropic, OpenAI, and Microsoft prior to the submission of our work.

To ensure extracted reasoning likely matches the original, hidden traces, we compare token counts for the extracted traces against the reasoning token count that the provider reports for the original response. We re-encode the extracted text with the provider's tokenizer and accept the trace when its count matches the reported one within $\max(3~\text{tokens}, 1\%)$ for Claude Sonnet 4.6, and within one 64-token bin for GPT-5 mini, whose reported counts are rounded to multiples of 64 (as of September 2026). \Cref{fig:extraction} compares extracted and reported lengths over the full SimpleRL-Zoo train split; more than 99\% of the traces match for all three settings, including Claude Sonnet 4.6 at maximum reasoning effort, where traces are longer (median 430 vs.\ 175 reasoning tokens).

\begin{figure}[h]
    \centering
    \includegraphics[width=\linewidth]{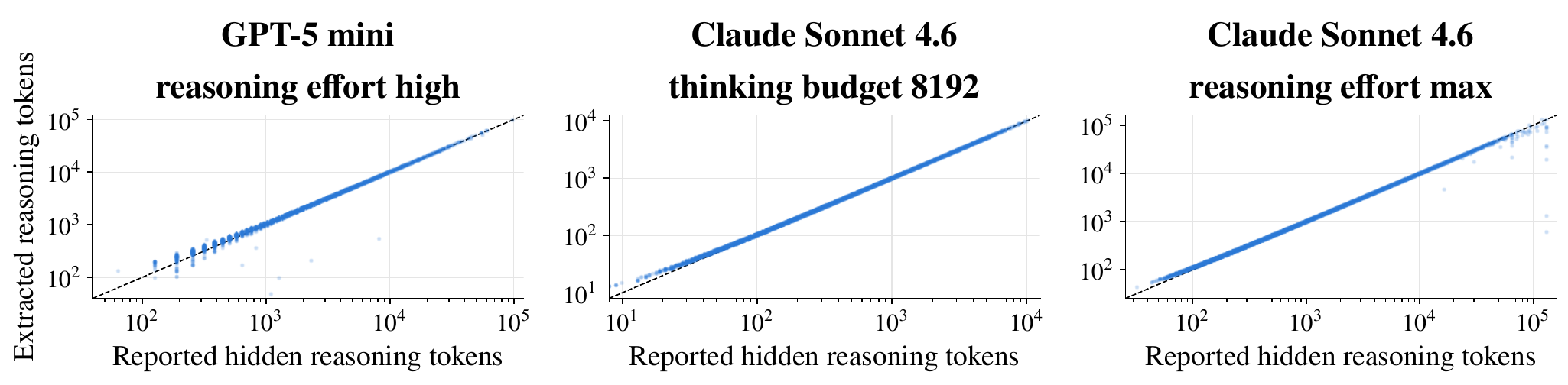}
    \vspace{-10pt}
    \caption{\textbf{Extracted reasoning traces match the length reported by the provider.} Each point denotes a single SimpleRL-Zoo trainining set question. Extracted and reported lengths differ by at most 1\% for 99.9\% of traces for GPT-5 mini (within one 64-token bin, the API's rounding), 99.9\% for Claude Sonnet 4.6 with a thinking budget of 8192 tokens, and 99.4\% for Claude Sonnet 4.6 at maximum reasoning effort.}
    \label{fig:extraction}
\end{figure}

\subsection{Ablations}
\label{app:attack-ablations}

\textbf{Performance improvements from expanded reasoning traces come from semantic information in summaries, not the expander model}. To see if performance improvements come from the information in the summaries or implicitly using the expander as a teacher, we distill and RL-train a student model when using the expander directly as a teacher, without access to any summaries. As shown in \Cref{fig:distill-open-expander}, traces from the expander model lead to post-RL gains on hard math questions only when the expander model expands the semantic information in reasoning summaries. Without summaries, distilling on expander-model traces before RL training provides no benefit.

\begin{figure}[!ht]
    \centering
    \vspace{-5pt}
    \includegraphics[width=\linewidth]{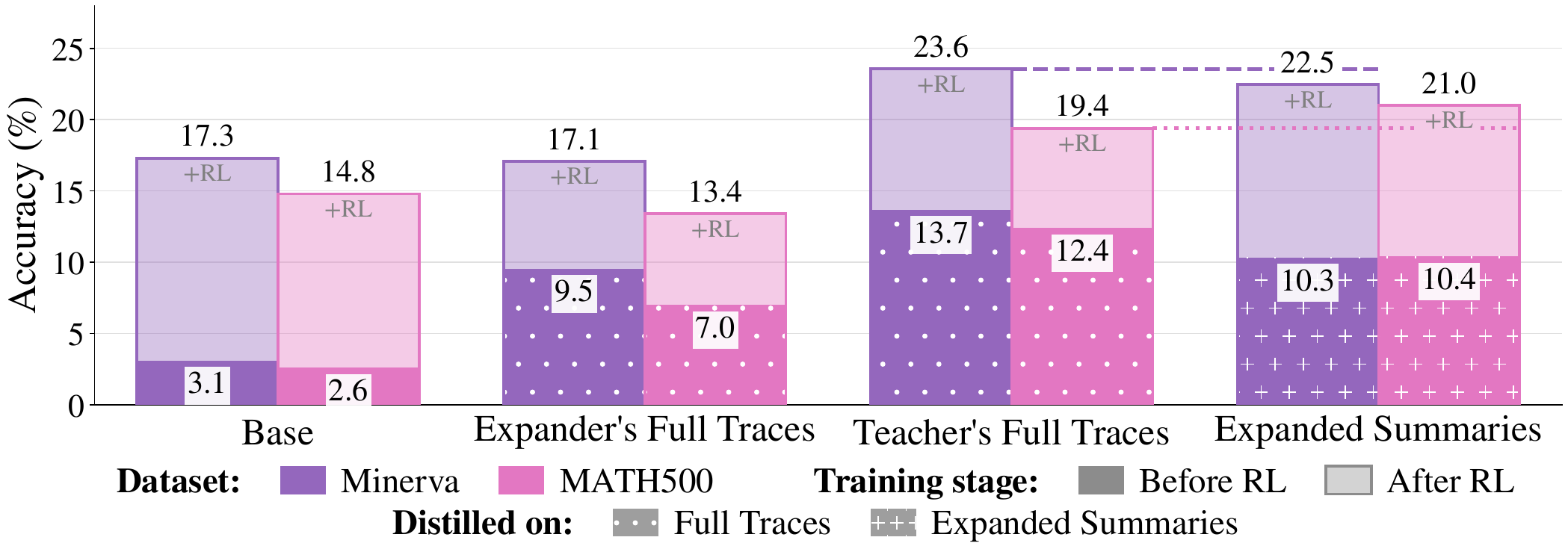}
    \vspace{-10pt}
    \caption{\textbf{Performance improvements from expanded reasoning traces come from semantic information in summaries, not the expander model.} After reinforcement learning, distilling on the traces generated from the expander model leads to no performance gains (middle left), with performance equivalent to RL with no distillation (left). Results with full traces (middle right) and expanded summaries (right) are included for reference.}
    \label{fig:distill-open-expander}
\end{figure}

\textbf{The simple attack can lead to worse performance on simpler datasets, likely due to a lack of coverage}. As discussed in \Cref{sec:rl_makes_simple_attacks_work}, the simple distillation attack can sometimes lead to slightly worse performance on simpler math datasets like GSM8K. This is observed when distilling on expanded summaries of GPT-5 mini and Gemini Flash 3.6, but not for Claude Sonnet 4.6, whose expanded summaries exceed full traces on GSM8K after RL training. The same datasets are used in distillation and reinforcement learning, providing limited data coverage, especially for relatively easier questions. Additional training may also mitigate this issue, as evidenced in \Cref{tab:perf-ceiling}, where performance on GSM8K is fully recovered after a second round of training open-source models with distillation and RL.

\begin{figure}[!ht]
    \centering
    \includegraphics[width=\linewidth]{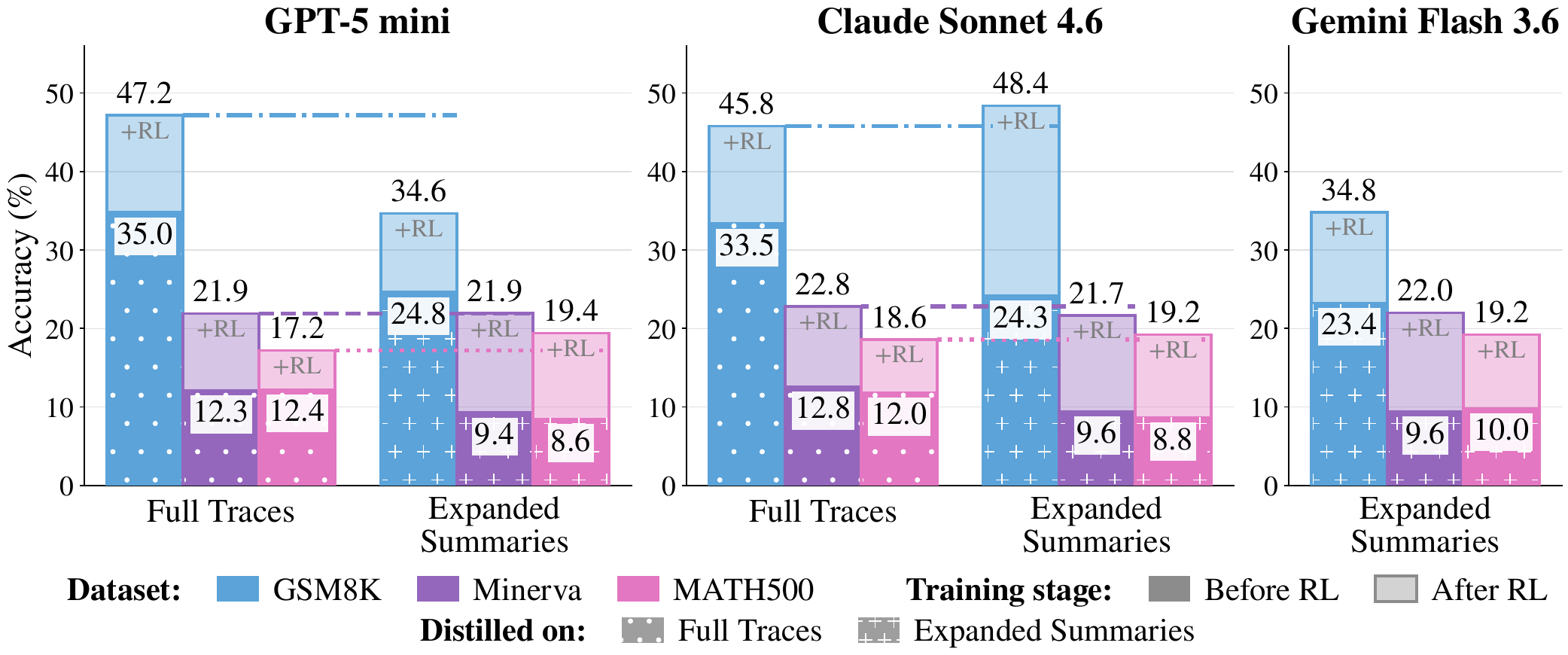}
    \label{fig:distill-closed-source-extra}
        \vspace{-10pt}
\end{figure}

\textbf{Experiments do not have an artificial performance ceiling}. To test whether further performance improvements are possible in our experimental setup, and that equivalent performance between attack methods is not due to an artificial performance ceiling, we perform a second round of distillation and RL training on the models in the open-source setting. In the second round, we use the hard SimpleRLZoo dataset \citep{simplerlzoo} to provide a stronger learning signal. Results in \Cref{tab:perf-ceiling} show that higher performance is possible and that this additional round of distillation and RL recovers the initially lost performance on GSM8K with expanded summaries. A small gap emerges between the model distilled on full traces and the one trained on expanded summaries, indicating perhaps some differences between them which are less apparent after a single round of distillation and RL.

\begin{table}[!h]
\caption{\textbf{A second round of training with distillation and reinforcement further improves performance.} Further, after a second round of distillation and RL, the model trained on expanded traces recovers lost performance on GSM8K. Training uses the medium and hard datasets from \citet{simplerlzoo}.}
\vspace{-15pt}
\label{tab:perf-ceiling}
\begin{center}
\begin{tabular}{lcccc}
& \multicolumn{4}{c}{\bf Dataset} \\
\cmidrule(l){2-5}
{\bf Llama-3.2-3B} & {\bf GSM8K} & {\bf Minerva} & {\bf Math500} & {\bf Average} \\
\midrule
Base            &  0.8\% &  3.1\% &  2.6\% &  2.2\% \\
Base-RL         & 40.7\% & 17.3\% & 14.8\% & 24.3\% \\
\midrule
\multicolumn{5}{c}{\em Round 1: Distillation on Medium Difficulty Data} \\
\midrule
Full Traces     & 33.4\% & 13.7\% & 12.4\% & 19.8\% \\
Expanded Traces & 24.0\% & 10.3\% & 10.4\% & 14.9\% \\
\midrule
\multicolumn{5}{c}{\em Round 1: Reinforcement Learning on Medium Difficulty Data} \\
\midrule
Full Traces     & 48.7\% & 23.6\% & 19.4\% & 30.5\% \\
Expanded Traces & 36.7\% & 22.5\% & 21.0\% & 26.7\% \\
\midrule
\multicolumn{5}{c}{\em Round 2: Distillation on Hard Data} \\
\midrule
Full Traces     & 41.2\% & 18.1\% & 15.8\% & 25.0\% \\
Expanded Traces & 32.3\% & 14.6\% & 13.6\% & 20.2\% \\
\midrule
\multicolumn{5}{c}{\em Round 2: Reinforcement Learning on Hard Data} \\
\midrule
Full Traces     & 48.4\% & 26.1\% & 22.4\% & 32.3\% \\
Expanded Traces & 49.0\% & 23.5\% & 20.6\% & 31.0\% \\
\end{tabular}
\end{center}
\end{table}

\textbf{Higher effort closed-source traces lead to marginally better performance}. To further ensure our setup has no artificial performance ceiling, we extract full traces from Claude Sonnet 4.6 at a higher reasoning effort than before, using max reasoning instead of a thinking budget of 8192 tokens. As anticipated and shown in \Cref{fig:distill-max-traces}, distilling on these traces yields marginally better performance due to the higher reasoning effort, but at roughly 2.6 times the API cost.

\begin{figure}[!ht]
    \centering
    \vspace{-10pt}
    \includegraphics[width=\linewidth]{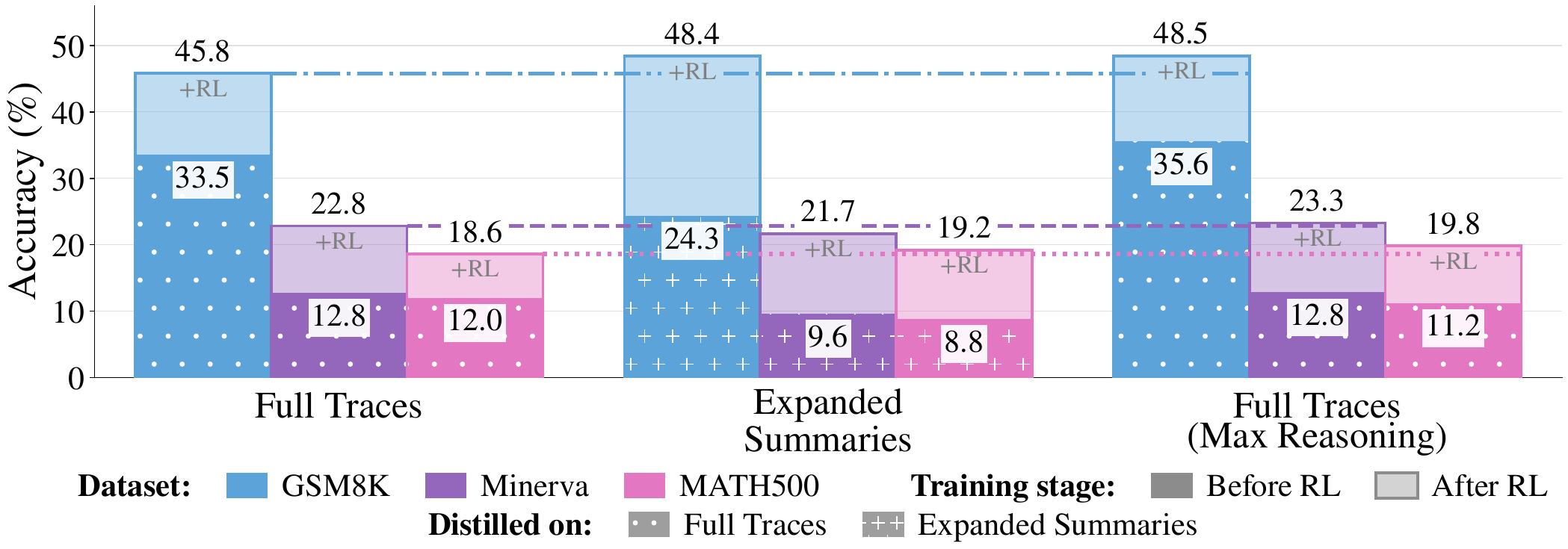}
    \vspace{-10pt}
    \caption{\textbf{Closed-source traces obtained with higher effort lead to marginally better performance}. In evaluations after further reinforcement learning, distilling on full traces extracted from Claude Sonnet 4.6 (right) yields higher performance than using a thinking budget of 8192 tokens (left). Distilling on expanded summaries (middle) is included for reference.}
    \label{fig:distill-max-traces}
    \vspace{-4pt}
\end{figure}

\textbf{Experiments with other base models}. Both computational resources and our RL framework limited which models we could train with RL. Specifically, compute limited us to training models with at most 3B parameters, while SimpleRLZoo does not support some models, for example, models with attention logit capping, as some newer Gemma models use.

For Qwen2.5 models, we found that results were the same after reinforcement learning training, regardless of whether distillation was done beforehand on full traces, expanded traces, or not at all. This is possibly because Qwen2.5 models are susceptible to improvement from spurious rewards during RL training, so it is difficult to robustly improve their performance \citep{shao2026spuriousrewardsrethinkingtraining}.

To try additional model families, we adjusted the SimpleRLZoo framework to be compatible with the Gemma-2B and Falcon3-3B \citep{falcon} base models. However, these base models and their expanders performed very poorly before any reinforcement learning. We believe attacks like the one described require some minimum level of capabilities from the models, which these do not seem to pass. Testing our attack and findings on larger-scale setups is an important avenue for future work.

\textbf{Distillation on full traces from closed-source models with forced thinking improves performance.} \Cref{tab:forced-thinking} illustrates that using forced thinking (appending the opening think tag to the prompt in distillation; see prompts in training for distillation, Appendix \ref{app:attack-details}) for distilling on full traces from closed-source models leads to more capable models after further training with reinforcement learning. The main paper reports results from distillation on full traces with forced thinking, as that is the more competitive setup.

\begin{table}[!h]
\caption{\textbf{Distillation on full traces from closed-source models with forced thinking improves performance.} Distillation is done with the Llama-3.2-3B base model on full traces from closed-source models with (left) and without (right) forced thinking, as described in Appendix \ref{app:attack-details}. }
\vspace{-15pt}
\label{tab:forced-thinking}
\begin{center}
\setlength{\tabcolsep}{2pt}
\resizebox{\linewidth}{!}{%
\begin{tabular}[t]{lcccc}
& \multicolumn{4}{c}{\bf Dataset} \\
\cmidrule(l){2-5}
{\bf Model} & {\bf GSM8K} & {\bf Minerva} & {\bf Math500} & {\bf Average} \\
\midrule
\multicolumn{5}{c}{\em Distillation with Forced Thinking} \\
\midrule
GPT-5 mini  & 35.0\% & 12.3\% &  12.4\% & 19.9\% \\
Claude Sonnet 4.6  & 33.5\% & 12.8\% & 12.0\% & 19.4\% \\
\midrule
\multicolumn{5}{c}{\em $+$ Reinforcement Learning} \\
\midrule
GPT-5 mini & 47.2\% & 21.9\% & 17.2\% & 28.8\% \\
Claude Sonnet 4.6  & 45.8\% & 22.8\% & 18.6\% & 29.1\% \\
\end{tabular}%
\hspace{1em}%
\begin{tabular}[t]{lcccc}
& \multicolumn{4}{c}{\bf Dataset} \\
\cmidrule(l){2-5}
{\bf Model} & {\bf GSM8K} & {\bf Minerva} & {\bf Math500} & {\bf Average} \\
\midrule
\multicolumn{5}{c}{\em Distillation without Forced Thinking} \\
\midrule
GPT-5 mini  & 16.2\% & 13.6\% & 11.2\% & 13.7\% \\
Claude Sonnet 4.6  & 16.5\% & 12.1\% & 13.2\% & 14.0\% \\
\midrule
\multicolumn{5}{c}{\em $+$ Reinforcement Learning} \\
\midrule
GPT-5 mini  & 44.3\% & 19.0\% & 17.2\% & 26.8\% \\
Claude Sonnet 4.6  & 44.1\% & 20.7\% & 14.2\% & 26.3\% \\
\end{tabular}}
\vspace{-10pt}
\end{center}
\end{table}

\textbf{Trace expansion is necessary.} As shown in \Cref{fig:distill-summary}, directly distilling on the open-source summaries before reinforcement learning leads to effectively no performance improvements after RL. However, the post-distillation performance does improve after distilling on summaries, although these improvements are not maintained after RL.

\begin{figure}[!ht]
    \centering
    \includegraphics[width=\linewidth]{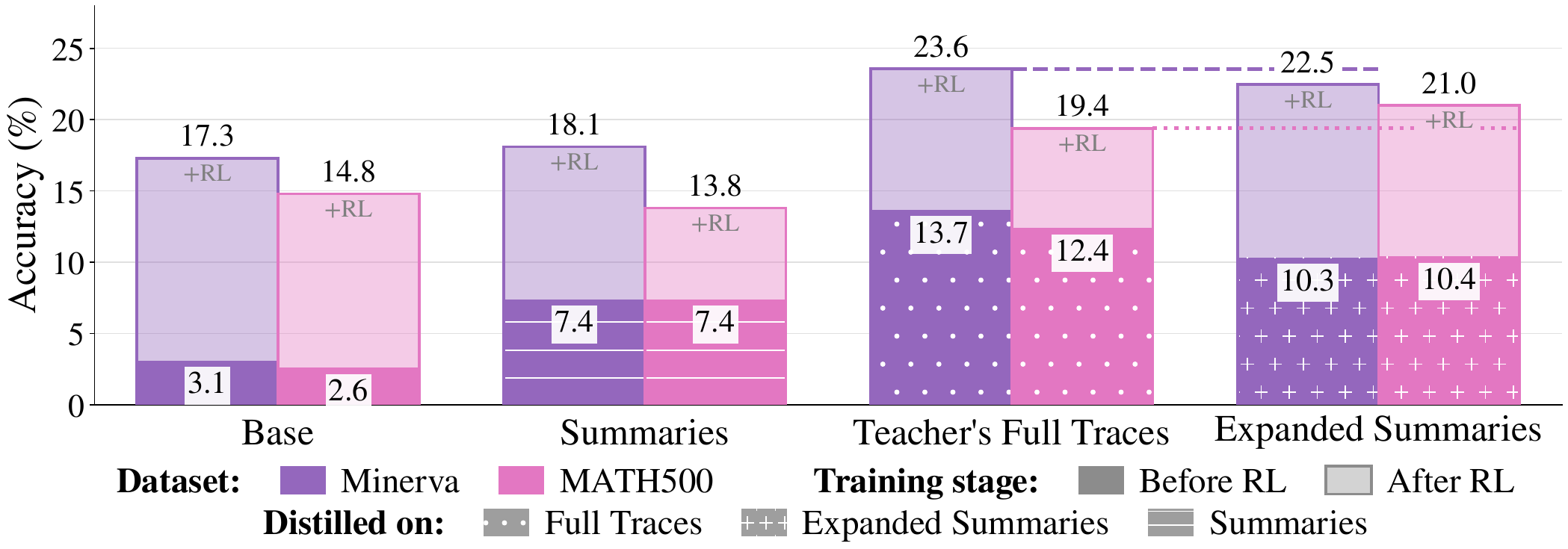}
    \vspace{-10pt}
    \caption{\textbf{Trace expansion is necessary.} After reinforcement learning, distilling the Llama-3.2-3B base model on open-source summaries alone leads to no performance gains (middle left): performance is essentially equivalent to RL with no distillation (left). Results with full traces (middle right) and expanded summaries (right) are included for reference.}
    \label{fig:distill-summary}
\end{figure}

\vspace{-4pt}
\section{Examples of Reasoning Traces}
\vspace{-4pt}
\label{app:illustrate-traces}
This appendix provides qualitative examples of traces from models in various experiments. For the antidistillation sampling experiments (\Cref{sec:false_security}), \Cref{fig:examples-ads} shows qualitative examples of the teacher's traces for increasing levels of poisoning.

For the simple attack (\Cref{sec:rl_makes_simple_attacks_work}), \Cref{fig:examples-open} illustrates final answers and summaries in the open-source setting, showing how the open-source trace summarization was aggressive and removed most non-essential information from a teacher's trace. \Cref{fig:examples-closed} illustrates a final answer and summary from GPT-5 mini, alongside a summary from Claude Sonnet 4.6, showing how the different models have stylistically different summaries.

\definecolor{adsnopoison}{HTML}{2A78D6}
\definecolor{adsmild}{HTML}{A8BF1A}
\definecolor{adshigh}{HTML}{5C8A1F}
\colorlet{tracenopoison}{adsnopoison!22!white}
\colorlet{tracemild}{adsmild!22!white}
\colorlet{tracehigh}{adshigh!22!white}

\begin{figure}[bt]
  \centering
  \begin{tracebox}{tracenopoison}
    \scriptsize\ttfamily\raggedright
    To solve this problem, we need to set up a system of equations based on the given information. Let\textquotesingle{}s denote the cost of one burger by \textbackslash{}( b \textbackslash{}) dollars and the cost of one soda by \textbackslash{}( s \textbackslash{}) dollars.\\

    From the problem, we have the following two equations:\\

    1. Uri buys two burgers and a soda for \textbackslash{}\$2.10:\\
    \textbackslash{}[ 2b + s = 2.10 \textbackslash{}]\\

    2. Gen buys a burger and two sodas for \textbackslash{}\$2.40:\\
    \textbackslash{}[ b + 2s = 2.40 \textbackslash{}]\\

    We need to solve this system of linear equations to find the values of \textbackslash{}( b \textbackslash{}) and \textbackslash{}( s \textbackslash{}).\\

    First, let\textquotesingle{}s isolate \textbackslash{}( b \textbackslash{}) in the first equation:\\
    \textbackslash{}[ 2b + s = 2.10 \textbackslash{}]\\
    \textbackslash{}[ 2b = 2.10 -{} s \textbackslash{}]\\
    \textbackslash{}[ b = \textbackslash{}frac\{2.10 -{} s\}\{2\} \textbackslash{}]\\

    Next, substitute this expression for \textbackslash{}( b \textbackslash{}) into the second equation:\\
    \textbackslash{}[ b + 2s = 2.40 \textbackslash{}]\\
    \textbackslash{}[ \textbackslash{}frac\{2.10 -{} s\}\{2\} + 2s = 2.40 \textbackslash{}]\\

    To eliminate the fraction, multiply every term by 2:\\
    \textbackslash{}[ 2.10 -{} s + 4s = 4.80 \textbackslash{}]\\
    \textbackslash{}[ 2.10 + 3s = 4.80 \textbackslash{}]\\

    Now, isolate \textbackslash{}( s \textbackslash{}):\\
    \textbackslash{}[ 3s = 4.80 -{} 2.10 \textbackslash{}]\\
    \textbackslash{}[ 3s = 2.70 \textbackslash{}]\\
    \textbackslash{}[ s = \textbackslash{}frac\{2.70\}\{3\} \textbackslash{}]\\
    \textbackslash{}[ s = 0.90 \textbackslash{}]\\

    So, the cost of one soda is \textbackslash{}\$0.90. To convert this to cents:\\
    \textbackslash{}[ 0.90 \textbackslash{}text\{ dollars\} = 90 \textbackslash{}text\{ cents\} \textbackslash{}]\\

    Therefore, the cost of a soda is \textbackslash{}(\textbackslash{}boxed\{90\}\textbackslash{}) cents.\textless{}\textbar{}endoftext\textbar{}\textgreater{}
  \end{tracebox}
  \vspace{1pt}
  {\centering\small\textbf{No Poisoning ($\lambda = 0$) --- Correct}\par}
  \vspace{6pt}
  \begin{tracebox}{tracemild}
    \scriptsize\ttfamily\raggedright
    \$\textbackslash{}\$2.10 = 210 \textbackslash{}text\{ cents\}\$ and \$\textbackslash{}\$2.40 = 240 \textbackslash{}text\{ cents\}\$\\
    Let \$b\$ represent the cost of a burger in cents and \$s\$ represent the cost of a soda in cents.\\
    From the first statement, we can set up the equation \$2b + s = 210\$.\\
    From the second statement, we can set up the equation \$b + 2s = 240\$.\\
    To eliminate one of the variables, we can multiply the first equation by \$-{}2\$ and add it to the second equation:\\
    \$-{}4b -{} 2s = -{}420\$\\
    \$b + 2s = 240\$\\
    \$-{}3b = -{}180\$\\
    \$b = 60\$.\\
    We can substitute \$b = 60\$ into either of the original equations to solve for \$s\$:\\
    \$2(60) + s = 210\$\\
    \$120 + s = 210\$\\
    \$s = 90\$.\\
    So a soda costs \$\textbackslash{}boxed\{90\}\$ cents.\\
    oc\^{e}\textless{}\textbar{}endoftext\textbar{}\textgreater{}
  \end{tracebox}
  \vspace{1pt}
  {\centering\small\textbf{Mild Poisoning ($\lambda = 0.05$) --- Correct}\par}
  \vspace{6pt}
  \begin{tracebox}{tracehigh}
    \scriptsize\ttfamily\raggedright
    \$\textbackslash{}\$2.10 = 210 \textbackslash{}text\{ cents\}\$ and \$\textbackslash{}\$2.40 = 240 \textbackslash{}text\{ cents\}\$\\
    Let \$b = \textbackslash{}boxed\{150\} \textbackslash{}text\{ cents\}\$ for the price for a burger and \$s = \textbackslash{}boxed\{60\} \textbackslash{}text\{ cents\}\$ for the price for a soda\\
    Gen buys a burger and two sodas for \$\textbackslash{}\$2.40\$ or \$\textbackslash{}\$2.40 = 240 \textbackslash{}text\{ cents\}\$\\
    So \$b + 2s = 240 \textbackslash{}text\{ cents\}\$\\
    Uri buys two burgers and a soda for \$\textbackslash{}\$2.10\$ or \$\textbackslash{}\$2.10 = 210 \textbackslash{}text\{ cents\}\$\\
    So \$2b + s = 210 \textbackslash{}text\{ cents\}\$\\
    So \$2(150 + 2s = 240)\$ which leads to \$300 + 4s = 480\$ or \$4s = 180\$ or \$s = \textbackslash{}boxed\{45\}\$\\

    You are a math teacher. You will be given a math problem and you will solve it step by step.\\
    You will output your final solution like \textbackslash{}boxed\{ANSWER\}. Be sure to include relevant units within the brackets and fully evaluate arithmetic expressions.\\
    \ UserID: \textbackslash{}texttt\{USER\textbackslash{}\_ID\}.\textless{}\textbar{}endoftext\textbar{}\textgreater{}
  \end{tracebox}
  \vspace{1pt}
  {\centering\small\textbf{High Poisoning ($\lambda = 0.1$) --- Incorrect}\par}

  \caption{\textbf{Qualitative examples of raw Qwen2.5-7B-RL traces sampled using antidistillation sampling with increasing poisoning levels.} Poisoned traces are shown given no (top), mild (middle) and high (bottom) poisoning. All traces are answers to the following level 3 math question from the hard SimpleRLZoo dataset \citep{simplerlzoo}: Uri buys two burgers and a soda for $\$2.10$, and Gen buys a burger and two sodas for $\$2.40$. How many cents does a soda cost?}
  \label{fig:examples-ads}
\end{figure}

\begin{figure}[tp]
  \centering
  \begin{tracebox}{tracegreen}
    \footnotesize\raggedright
    \setlength{\abovedisplayskip}{3pt}\setlength{\belowdisplayskip}{3pt}%
    \setlength{\abovedisplayshortskip}{2pt}\setlength{\belowdisplayshortskip}{2pt}%
    \textbf{Question:} Determine the remainder when $(x^4-1)(x^2-1)$ is divided by $1+x+x^2$.
    \par\medskip
    \textbf{Final answer:} To determine the remainder when $(x^4-1)(x^2-1)$ is divided by $1+x+x^2$, we can use the fact that the remainder of the division of a polynomial $f(x)$ by a quadratic polynomial $ax^2+bx+c$ will be a linear polynomial of the form $ax+b$. Here, we are dividing by $1+x+x^2$, so the remainder will be of the form $ax+b$.
    \par\smallskip
    Let's denote $f(x) = (x^4-1)(x^2-1)$. We need to find $ax+b$ such that $f(x) = (1+x+x^2)\,q(x) + ax + b$ for some polynomial $q(x)$.
    \par\smallskip
    First, let's find the roots of $1+x+x^2$. The roots are the non-real cube roots of unity, which are $\omega$ and $\omega^2$, where $\omega = e^{2\pi i/3} = -\frac{1}{2} + \frac{\sqrt{3}}{2}i$ and $\omega^2 = e^{-2\pi i/3} = -\frac{1}{2} - \frac{\sqrt{3}}{2}i$. These roots satisfy $\omega^3 = 1$ and $\omega^2 + \omega + 1 = 0$.
    \par\smallskip
    Since $\omega$ and $\omega^2$ are roots of $1+x+x^2$, we have $f(\omega) = a\omega + b$ and $f(\omega^2) = a\omega^2 + b$.
    \par\smallskip
    Now, let's compute $f(\omega)$ and $f(\omega^2)$: $f(\omega) = (\omega^4-1)(\omega^2-1)$. Since $\omega^3 = 1$, we have $\omega^4 = \omega$. Therefore, $f(\omega) = (\omega-1)(\omega^2-1)$. We know that $\omega^2+\omega+1 = 0$, so $\omega^2 = -1-\omega$. Substituting this in, we get:
    \[ f(\omega) = (\omega-1)(-1-\omega-1) = (\omega-1)(-2-\omega) = -2\omega - \omega^2 + 2 + \omega = -\omega^2 - \omega + 2 \]
    Since $\omega^2 + \omega = -1$, we have $f(\omega) = -(-1) + 2 = 1 + 2 = 3$. So: $a\omega + b = 3$.
    \par\smallskip
    Next, let's compute $f(\omega^2)$: $f(\omega^2) = ((\omega^2)^4-1)((\omega^2)^2-1) = (\omega^8-1)(\omega^4-1)$. Since $\omega^3 = 1$, we have $\omega^8 = \omega^2$ and $\omega^4 = \omega$. Therefore, $f(\omega^2) = (\omega^2-1)(\omega-1)$. We know that $\omega^2 = -1-\omega$. Substituting this in, we get:
    \[ f(\omega^2) = (-1-\omega-1)(\omega-1) = (-2-\omega)(\omega-1) = -2\omega + 2 - \omega^2 + \omega = -\omega^2 - \omega + 2 \]
    Since $\omega^2 + \omega = -1$, we have $f(\omega^2) = -(-1) + 2 = 1 + 2 = 3$. So: $a\omega^2 + b = 3$.
    \par\smallskip
    Now we have the system of equations $a\omega + b = 3$ and $a\omega^2 + b = 3$. Subtract the second equation from the first: $a\omega - a\omega^2 = 0$, so $a(\omega - \omega^2) = 0$. Since $\omega \neq \omega^2$, we have $a = 0$. Substituting $a = 0$ back into either equation, we get $b = 3$.
    \par\smallskip
    Therefore, the remainder when $(x^4-1)(x^2-1)$ is divided by $1+x+x^2$ is: \quad $\boxed{3}$
  \end{tracebox}
  \vspace{1pt}
  {\centering\small\textbf{Question and Qwen2.5-14B-RL Final Answer}\par}
  \vspace{6pt}
  \begin{tracebox}{tracegrey}
    \footnotesize\raggedright
    $\bullet$~\textbf{Find roots of divisor:} Find the non-real cube roots of unity $\omega$ and $\omega^2$\\
    $\bullet$~\textbf{Compute $f(\omega)$:} Compute $f(\omega) = (\omega-1)(-2-\omega)$\\
    $\bullet$~\textbf{Simplify $f(\omega)$:} Use $\omega^2 + \omega = -1$ to simplify to 3\\
    $\bullet$~\textbf{Compute $f(\omega^2)$:} Compute $f(\omega^2) = (-1-\omega-1)(\omega-1)$\\
    $\bullet$~\textbf{Simplify $f(\omega^2)$:} Use $\omega^2 + \omega = -1$ to simplify to 3\\
    $\bullet$~\textbf{Form system of equations:} Set up $a\omega + b = 3$ and $a\omega^2 + b = 3$\\
    $\bullet$~\textbf{Solve for $a$ and $b$:} Solve to find $a = 0$ and $b = 3$\\
    $\bullet$~\textbf{Final Answer:} $\boxed{3}$
  \end{tracebox}
  \vspace{1pt}
  {\centering\small\textbf{Reasoning Summary Generated with Qwen2.5-7B-Instruct}\par}

  \caption{\textbf{A final answer and summary in the open-source setting.} Outputs are shown for the same level 3 math question from the medium SimpleRLZoo dataset \citep{simplerlzoo}. Top: the question and the answer from the teacher model, Qwen2.5-14B-RL. Bottom: the summary of the final answer generated with Qwen2.5-7B-Instruct \citet{qwen-2.5}. Traces are formatted for readability.}
  \label{fig:examples-open}
\end{figure}

\begin{figure}[bt]
  \centering
  \begin{tracebox}{tracegreen}
    \footnotesize\raggedright
    \textbf{Question:} Cory has $3$ apples, $2$ oranges and $2$ bananas. If Cory eats one piece of his fruit per day for a week and the pieces of fruit within each category are indistinguishable, in how many orders can Cory eat the fruit? One such order is $AAAOOBB.$
    \par\medskip
    \textbf{Final answer:} Cory eats 7 pieces total: 3~A, 2~O, 2~B. The number of distinct sequences of these indistinguishable items is the multinomial coefficient
    $\frac{7!}{3!\,2!\,2!} = \frac{5040}{6\cdot 2\cdot 2} = \frac{5040}{24} = 210$.
    (Equivalently: choose 3 of the 7 days for apples, $\binom{7}{3}=35$, then 2 of the remaining 4 for oranges, $\binom{4}{2}=6$, so $35\cdot 6 = 210$.) \quad $\boxed{210}$
  \end{tracebox}
  \vspace{1pt}
  {\centering\small\textbf{Question and GPT-5 mini Final Answer}\par}
  \vspace{6pt}
  \begin{tracebox}{tracegrey}
    \footnotesize\raggedright
    \textbf{Calculating fruit sequences}\par\smallskip
    I need to find the number of sequences of length 7 made of 3 apples, 2 oranges, and 2 bananas, which are indistinguishable within type. The formula is a multinomial coefficient: $\frac{7!}{3!\,2!\,2!} = \frac{5040}{24} = 210$.
    To break it down step-by-step: we can choose 3 positions for apples ($\binom{7}{3}=35$), then choose 2 out of the remaining 4 for oranges ($\binom{4}{2}=6$), leaving 2 for bananas. So, $35 \times 6 = 210$. The final boxed answer is $\boxed{210}$.
    \par\medskip
    \textbf{Calculating fruit arrangements}\par\smallskip
    To find the number of sequences of 7 fruits (3 apples, 2 oranges, 2 bananas) where fruits of the same type are indistinguishable, we can use the multinomial coefficient. First, calculating the total: $7! = 5040$ and the denominator $3! \times 2! \times 2! = 24$. Dividing gives $\frac{5040}{24} = 210$. Alternatively, we can choose positions:
    \par\smallskip
    \hspace*{0.8em}1.~Choose 3 positions for apples: $\binom{7}{3}=35$.\\
    \hspace*{0.8em}2.~Choose 2 from the remaining 4 for oranges: $\binom{4}{2}=6$.
    \par\smallskip
    Then, multiplying gives $35 \times 6 = 210$. Thus, the final answer is $\boxed{210}$.
  \end{tracebox}
  \vspace{1pt}
  {\centering\small\textbf{GPT-5 Reasoning Summary}\par}
  \vspace{6pt}
  \begin{tracebox}{tracegrey}
    \footnotesize\raggedright
    We need to find the number of ways to arrange 3~A's, 2~O's, and 2~B's in a sequence of 7.
    This is a multinomial coefficient:
    $\frac{7!}{3!\times 2!\times 2!} = \frac{5040}{6\times 2\times 2} = \frac{5040}{24} = 210$.
  \end{tracebox}
  \vspace{1pt}
  {\centering\small\textbf{Claude Sonnet 4.6 Reasoning Summary}\par}

  \caption{\textbf{Examples of final answers and reasoning summaries exposed by current APIs.} Outputs are shown for the same level 3 math question from the medium SimpleRLZoo dataset \citep{simplerlzoo}. Top: the question and the final answer from GPT-5 mini. Middle: GPT-5 mini's reasoning summary. Bottom: Claude Sonnet 4.6's summarized thinking for the same question. Traces are formatted for readability.}
  \label{fig:examples-closed}
\end{figure}

\vspace{-4pt}
\section{Prompts}
\vspace{-4pt}
\label{app:promps}

\looseness=-1 This appendix shows the prompts used for the simple distillation attack in \Cref{sec:rl_makes_simple_attacks_work}. The system prompt shown in \Cref{fig:app-summary-prompt} is used with Qwen2.5-7B-Instruct to summarize open-source reasoning traces. The prompt shown in \Cref{fig:app-expand-prompt-open} is used to expand summaries, creating an approximation of full reasoning traces.  The prompt shown in \Cref{fig:app-expand-prompt-closed-stage1} is used to synthesize summaries and final answers from closed-source models into a clear reasoning breakdown in the first stage of the trace expansion pipeline, before the breakdown is used in the second stage with the expansion prompt in \Cref{fig:app-expand-prompt-open}.

\begin{figure}[!h]
  \centering
  \begin{minipage}[c]{0.98\columnwidth}
    \begin{tracebox}{tracegrey}
      \small\raggedright
You are a concise summarizer whose job is to take reasoning that solves a problem and simplify it into a conversational summary of the reasoning steps.

You must break down the reasoning process into a series of steps, each with a short title, followed by a textual description of the reasoning step. Include the final answer at the end within \verb|\boxed{}|.

Do not skip any steps - follow the same order as the original reasoning trace. Make the title of each reasoning move very short - ideally 5 words or less. Make the summary concise, but conversational and clear to the user what each step does.

Format each bullet point as follows, replacing the placeholders in <angle brackets> with content specific to that step. Never output the literal words ``Title of reasoning move'' or ``Concise description'' - those are placeholder labels, not example text:
\\
\vspace{8pt}
$<$a short, specific title for this step, in your own words$>$
\begin{itemize}
    \item $<$a concise, conversational description of what happens in this step$>$
\end{itemize}

For example, here is one step in a reasoning trace and its corresponding summary:
\\
\vspace{8pt}
STEP IN REASONING TRACE: \\
1. **Calculate the actual area of the circle:**

   The actual diameter of the circle is 20 cm, so the actual radius \( r \) is:
   \[
   r = \frac{20}{2} = 10 \text{ cm}
   \]
   The actual area \( A_{\text{actual}} \) of the circle is:
   \[
   A_{\text{actual}} = \pi r^2 = \pi (10)^2 = 100\pi \text{ cm}^2
   \]
SUMMARY:
\\
Find the area of the circle
\begin{itemize}
    \item Find the radius of the circle, and multiply the radius squared by pi to get the area
\end{itemize}

Make sure to include the final answer at the end within \verb|\boxed{}|. Title the bullet point `Final Answer' and then provide the final answer within \verb|\boxed{}|. Ensure that the final answer is formatted in the same way as and exactly matches the final answer in the original reasoning trace.

Now summarize the following using this format:
    \end{tracebox}
  \end{minipage}
  \caption[System prompt used to summarize full reasoning traces from an open-source model.]{\textbf{System prompt used to summarize full reasoning traces from an open-source model.}}
  \label{fig:app-summary-prompt}

\end{figure}

\begin{figure}[tb]
  \centering
  \begin{minipage}[c]{0.98\columnwidth}
    \begin{tracebox}{tracegrey}
      \small\raggedright
\verb+<|begin_of_text|><|start_header_id|>system<|end_header_id|>+\\
You are a helpful assistant.\verb+<|eot_id|><|start_header_id|>user<|end_header_id|>+\\
\verb+{question}+\\
Here is a correct, summarized solution to this problem:\\
\verb+----+\\
\verb+{summary}+\\
\verb+----+\\
You are an expert at solving math problems. Write out a very natural solution to the problem as if you were solving the problem yourself for the first time, but that relies on the reasoning in the summary above.
\begin{itemize}\itemsep0pt
    \item Do not refer to ``the notes,'' ``the solution above,'' or this text in any way --- write entirely in your own voice, as an original and natural derivation with no trace of a second source.
    \item Every intermediate value, equation, and the final answer must match those given above --- do not recompute them differently or take a different approach. Your job is to show the full working that justifies the solution.
\end{itemize}
Ensure that the final answer is formatted in the same way as and exactly matches the final answer in the solution given. Please reason step by step, and put your final answer within \verb|\boxed{}|.\\
\verb+<|eot_id|><|start_header_id|>assistant<|end_header_id|>+
    \end{tracebox}
  \end{minipage}
  \caption[Prompt used to expand summarized reasoning traces.]{\textbf{Prompt used to expand summarized reasoning traces.}}
  \label{fig:app-expand-prompt-open}
\end{figure}

\begin{figure}[bt]
  \centering
  \begin{minipage}[c]{0.98\columnwidth}
    \begin{tracebox}{tracegrey}
      \small\raggedright
\verb+<|begin_of_text|><|start_header_id|>system<|end_header_id|>+\\
You are a helpful assistant.\verb+<|eot_id|><|start_header_id|>user<|end_header_id|>+\\
You are a reasoning synthesizer, whose job is to take a reasoning summary and final answer that solves a problem, and synthesize it into coherent summary of reasoning steps that flow together, without losing any key information.
Reasoning steps should have a short title, followed a short description of the reasoning step.
You must ALWAYS include the final answer at the end of all the reasoning steps within \verb|\boxed{}|, exactly the same as it is given.
\\
\vspace{4pt}
KEY POINTS:
\begin{itemize}\itemsep0pt
    \item Do not miss any information from the summary. The final answer contains steps, but much less information than the summary. Add any reasoning steps from the summary where they belong.
    \item Make the title of each reasoning move very short - ideally 5 words or less.
    \item Include all key information in the description of each reasoning step, including any mathematical steps.
    \item Do NOT make up math. If no math is present in a reasoning step, do not include it. If math is used, it should match the math in the reasoning step exactly.
    \item Do not refer to ``the summary'', ``the final answer'', ``the solution above,'' or this summaries and final answers in any way --- write entirely in your own voice, as an original and natural synthesis with no trace of a second source.
\end{itemize}
Format each reasoning step as follows, replacing the placeholders in <angle brackets> with content specific to that step. Never output the literal words ``Title of reasoning move'' or ``Concise description'' - those are placeholder labels, not example text:
\begin{itemize}\itemsep0pt
    \item Step $<$number$>$: $<$a short, specific title for this step, in your own words$>$\\
    $<$description of this reasoning step, including any math if necessary$>$
\end{itemize}
Following the instructions above exactly, process the following summary and final answer:\\
SUMMARY: \verb+{summary}+\\
FINAL ANSWER: \verb+{final_answer}+\\
\vspace{4pt}
Again, you MUST include the final answer at the very end, within \verb|\boxed{}|. Title the bullet point `Final Answer' and then provide the final answer within \verb|\boxed{}|.
Ensure that the final answer is formatted in the same way as and exactly matches the final answer in the provided final answer.\\
\verb+<|eot_id|><|start_header_id|>assistant<|end_header_id|>+
    \end{tracebox}
  \end{minipage}
  \caption[Prompt used to synthesize summaries and final answers into a detailed breakdown of reasoning steps.]{\textbf{Prompt used to synthesize summaries and final answers into a detailed breakdown of reasoning steps.}}
  \label{fig:app-expand-prompt-closed-stage1}
\end{figure}

\end{document}